\newtheorem{theorem}{Theorem}[section]
\newtheorem{lemma}[theorem]{Lemma}
\title{Observe Before Play: Multi-armed Bandit with Pre-observations}
\author{%
  Jinhang Zuo, Xiaoxi Zhang, Carlee Joe-Wong\\
  Carnegie Mellon University\\
\{jzuo,xiaoxiz2,cjoewong\}@andrew.cmu.edu \\
}
\begin{document}
\maketitle

\begin{abstract}
    We consider the stochastic multi-armed bandit (MAB) problem in a setting where a player can pay to pre-observe arm rewards before playing an arm in each round. Apart from the usual trade-off between exploring new arms to find the best one and exploiting the arm believed to offer the highest reward, we encounter an additional dilemma: pre-observing more arms gives a higher chance to play the best one, but incurs a larger cost. For the single-player setting, we design an Observe-Before-Play Upper Confidence Bound (OBP-UCB) algorithm for $K$ arms with Bernoulli rewards, and prove a $T$-round regret upper bound $O(K^2\log T)$. In the multi-player setting, collisions will occur when players select the same arm to play in the same round. We design a centralized algorithm, C-MP-OBP, and prove its $T$-round regret relative to an offline greedy strategy is upper bounded in $O(\frac{K^4}{M^2}\log T)$ for $K$ arms and $M$ players. We also propose distributed versions of the C-MP-OBP policy, called D-MP-OBP and D-MP-Adapt-OBP, achieving logarithmic regret with respect to collision-free target policies. Experiments on synthetic data and wireless channel traces show that C-MP-OBP and D-MP-OBP outperform random heuristics and offline optimal policies that do not allow pre-observations.
\end{abstract}
\section{Introduction}\label{sec:intro}
Multi-armed bandit (MAB) problems have attracted much attention as a means of capturing the trade-off between exploration and exploitation \cite{bubeck2012regret} in sequential decision making. In the classical MAB problem, a player chooses one of a fixed set of arms and receives a reward based on this choice. The player aims to maximize her cumulative reward over multiple rounds, navigating a tradeoff between exploring unknown arms (to potentially discover an arm with higher rewards) and exploiting the best known arm (to avoid arms with low rewards). Most MAB algorithms use the history of rewards received from each arm to design optimized strategies for choosing which arm to play. They generally seek to prove that the \emph{regret}, or the expected difference in the reward compared to the optimal strategy when all arms' reward distributions are known in advance, grows sub-linearly with the number of rounds.

\subsection{Introducing Pre-observations}
The classical MAB exploration-exploitation tradeoff arises because knowledge about an arm's reward can only be obtained by playing that arm. In practice, however, this tradeoff may be relaxed. \cite{Yun:MAB}, for example, suppose that at the end of each round, the player can pay a cost to observe the rewards of additional un-played arms, helping to find the best arm faster. In cascading bandits~\cite{kveton2015cascading}, players may choose multiple arms in a single round, e.g., if the ``arms'' are search results in a web search application.

In both examples above, the observations made in each round do not influence the choice of arms in that round. In this paper, we introduce the \textbf{MAB problem with pre-observations}, where in each round, the player can pay to pre-observe the realized rewards of some arms before choosing an arm to play. For instance, one might play an arm with high realized reward as soon as it is pre-observed. 
Pre-observations can help to reconcile the exploration-exploitation tradeoff, but they also introduce an additional challenge: namely, {\it \textbf{optimizing the order of the pre-observations}}.
This formulation is inspired by Cognitive Radio Networks (CRNs), where users can use wireless channels when they are unoccupied by primary users. In each round, a user can sense (pre-observe) some channels (arms) to check their availability (reward) before choosing a channel to transmit data (play). Sensing more arms leaves less time for data transmission, inducing a cost of making pre-observations. 

In this pre-observation example, there are negative network effects when multiple players attempt to play the same arm: if they try to use the same wireless channel, for instance, the users ``collide'' and all transmissions fail. 
In multi-player bandit problems without pre-observations, players generally minimize these collisions by allocating themselves so that each plays a distinct arm with high expected reward. In our problem, the players must instead learn \emph{ordered sequences} of arms that they should pre-observe, minimizing overlaps in the sequences that might induce players to play the same arm. Thus, one user's playing a sub-optimal arm may affect other users' pre-observations, leading to cascading errors. We then encounter a new challenge of {\it \textbf{designing users' pre-observation sequences}} to minimize collisions but still explore unknown arms. This problem is particularly difficult when {\it \textbf{players cannot communicate or coordinate with each other}} to jointly design their observation sequences.
To the best of our knowledge, such \emph{multi-player bandit problems with pre-observations} have not been studied in the literature. 
\subsection{Applications}
Although many MAB works take cognitive radios as their primary motivation \cite{rosenski2016multi,besson2018multi,kumar2018trekking}, multi-player bandits with pre-observations could be applied to any scenario where users search for sufficiently scarce resources at multiple providers that are either acceptable (to all users) or not. We briefly list three more applications. First, users may sequentially bid in auctions (arms) offering equally useful items, e.g., Amazon EC2 spot instance auctions for different regions, stopping when they win an auction. Since these resources are scarce, each region may only be able to serve one user (modeling collisions between users). Second, in distributed caching, each user (player) may sequentially query whether one of several caches (arms) has the required file (is available), but each cache can only send data to one user at a time (modeling collisions). Third, taxis (players) can sequentially check locations (arms) for passengers (availability); collisions occur since each passenger can only take one taxi, and most locations (e.g., city blocks that are not next to transit hubs) would not have multiple passengers looking for a taxi at the same time. 

\subsection{Our Contributions} 
Our first contribution is to {\bf develop an Observe-Before-Play (OBP) policy} to maximize the total reward of a single user via minimizing the cost spent on pre-observations. Our OBP policy achieves a regret bound that is logarithmic with time and quadratic in the number of available arms. It is consistent with prior results~\cite{li2014almost}, and more easily generalizes to multi-player settings. 
In the rest of the paper, ``user'' and ``player'' are interchangeable. 

We next consider the multi-player setting. Unlike in the single-player setting, it is not always optimal to observe the arms with higher rewards first. We show that finding the offline optimal policy to maximize the overall reward of all players is NP-hard.
However, we give conditions under which a greedy allocation that avoids user collisions is offline-optimal; in practice, this strategy performs well. Our second research contribution is then to {\bf develop a centralized C-MP-OBP policy} that generalizes the OBP policy for a single user. Despite the magnified loss in reward when one user observes the wrong arm, we show that the C-MP-OBP policy can learn the arm rankings, and that its regret relative to the offline greedy strategy is logarithmic with time and polynomial in the number of available arms and users.
Our third research contribution is to {\bf develop distributed versions of our C-MP-OBP policy, called D-MP-OBP and D-MP-Adapt-OBP}. Both algorithms assume no communication between players and instead use randomness to avoid collisions. Despite this lack of communication, both achieve logarithmic regret over time with respect to the collision-free offline greedy strategies defined in the centralized setting. 

Our final contribution is to {\bf numerically validate our OBP, C-MP-OBP, and D-MP-OBP policies on synthetic reward data and channel availability traces}. We show that all of these policies outperform both random heuristics and traditional MAB algorithms that do not allow pre-observations, and we verify that they have sublinear regret over time. We further characterize the effect on the achieved regret of varying the pre-observation cost and the distribution of the arm rewards. 


We discuss related work in Section~\ref{sec:related} and consider the single-player setting in Section~\ref{sec:single}. We generalize these results to multiple players in centralized (Section~\ref{sec:centralized}) and distributed (Section~\ref{sec:distributed}) settings. We numerically validate our results in Section~\ref{sec:exp} and conclude in Section~\ref{sec:discuss}. Proofs are in Appendix.

\section{Related Work}\label{sec:related}
Multi-armed Bandit (MAB) problems have been studied since the 1950s~\cite{Lai1985,bubeck2012regret}. \cite{UCB1_survey}, for instance, propose a simple UCB1 policy that achieves logarithmic regret over time. 
Recently, MAB applications to Cognitive Radio Networks (CRNs) have attracted attention~\cite{ahmad2009optimality,lai2011cognitive}, especially in multi-player settings~\cite{liu2010distributed,distributed_jsac,avner2016multi,bonnefoi2017multi,kumar2018trekking} where users choose from the same arms (wireless channels). None of these works include pre-observations, though some~\cite{avner2014concurrent,rosenski2016multi,besson2018multi} consider distributed settings. \cite{li2014almost,combes2015learning} study the single-player MAB problem with pre-observations, but do not consider multi-player settings.

The proposed MAB with pre-observations in a single-player setting is a variant on cascading bandits~\cite{kveton2015cascading,kveton2015combinatorial,zong2016cascading}. The idea of pre-observations with costs is similar to the cost-aware cascading bandits proposed in \cite{zhou2018cost} and contextual combinatorial cascading bandits introduced in \cite{li2016contextual}. However, in \cite{zhou2018cost}, the reward collected by the player can be negative if all selected arms have zero reward in one round; in our model, the player will get zero reward if all selected arms are unavailable. Moreover, most cascading bandit algorithms are applied to recommendation systems, where there is only a single player. To the best of our knowledge, we are the first to study MAB problems with pre-observations in multi-player settings.

\section{Single-player Setting}\label{sec:single}
\begin{figure}[t]
\centering
\includegraphics[width=0.6\textwidth]{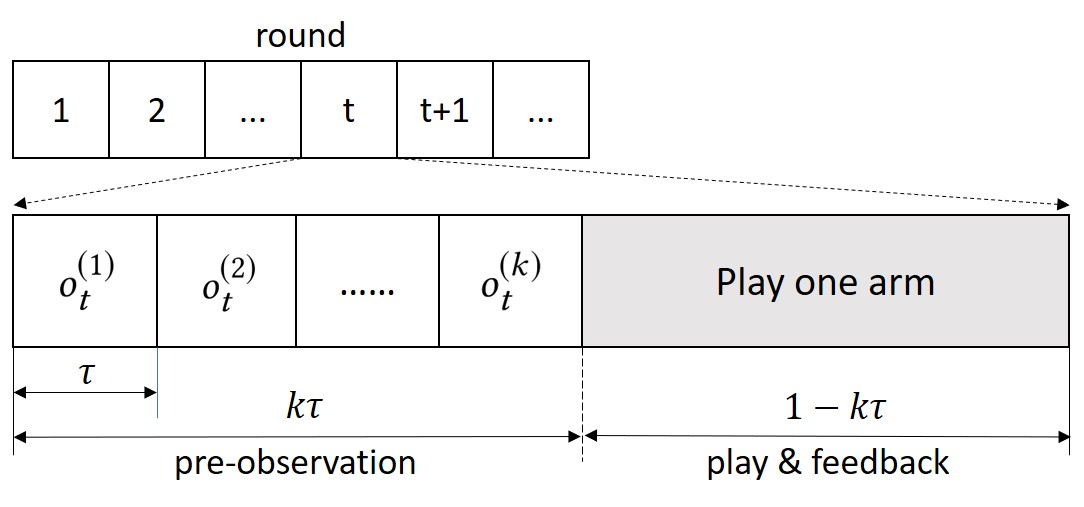}
\caption{Illustration of Pre-observations}
\label{fig:sensing}
\end{figure}
We consider a player who can pre-observe a subset of $K$ arms and play one of them, with a goal of maximizing the total reward over $T$ rounds. Motivated by the CRN scenario, we assume as in~\cite{distributed_jsac} an i.i.d. Bernoulli reward of each arm to capture the occupancy/vacancy of each channel (arm). Let $Y_{k,t}\stackrel{iid}{\sim} \text{Bern}(\mu_k) \in \{0,1\}$ denote the reward of arm $k$ at round $t$, with expected value $\mu_k \in [0,1]$. As shown in Figure \ref{fig:sensing}, in each round, the player chooses a pre-observation list $\boldsymbol{o}_t := (o^{(1)}_{t},o^{(2)}_{t},\dots,o^{(K)}_{t}),$ where $o^{(i)}_{t}$ represents the $i^{\text{th}}$ arm to be observed at $t$ and $\boldsymbol{o}_t$ is a permutation of $(1, 2, \dots, K)$. The player observes from the first arm $o^{(1)}_{t}$ to the last arm $o^{(K)}_{t}$, stopping at and playing the first good arm (reward = 1) until the list exhausts. We denote the index of the last observed arm in $\boldsymbol{o}_t$ as $I(t)$, which is the first available arm in $\boldsymbol{o}_t$ or $K$ if no arms are available.
Pre-observing each arm induces a constant cost $\tau$; in CRNs, this represents a constant time $\tau$ for sensing each channel's occupancy. We assume for simplicity that $0 < K\tau < 1$. The payoff received by the player at $t$ then equals: $(1-I(t)\,\tau) Y_{o_{t}^{(I(t))},t}$; if all the arms are bad (reward = 0) in round $t$, then the player will get zero reward for any $\boldsymbol{o}_t$. Given $\{\boldsymbol{o}_t\}^{T}_{t=1}$, we can then define the total realized and expected rewards received by the player in $T$ rounds:
\begin{align}
    r(T) &:= \sum_{t=1}^{T}(1-I(t)\,\tau) Y_{o_{t}^{(I(t))},t} \\
    \mathbb{E}[r(T)] &= \sum_{t=1}^{T}\sum_{k=1}^{K}\left\{(1 - k\,\tau)\mu_{o^{(k)}_{t}}\prod_{i=1}^{k-1}(1-\mu_{o^{(i)}_{t}})\right\},\label{eq:r(T)_single}
\end{align}
where $\prod_{i=1}^{0}(1-\mu_{o^{(i)}_{t}}):=1$.
We next design an algorithm for choosing $\boldsymbol{o}_t$ at each round $t$ to maximize $\mathbb{E}[r(T)]$.
We assume $\mu_1 \geq \mu_2 \geq \dots \geq \mu_K$ without loss of generality and first establish the optimal offline policy:
\begin{lemma}\label{lemma:descend_order}
The optimal offline policy $\boldsymbol{o}_t^{*}$ that maximizes the expected total reward is observing arms in the descending order of their expected rewards, {\em i.e.}, $\boldsymbol{o}_t^{*}=(1, 2, \dots, K)$.
\end{lemma}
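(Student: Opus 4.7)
The plan is to establish the optimality of the descending order by a standard adjacent-swap exchange argument. I will show that if any permutation $\boldsymbol{o}_t$ has two adjacent positions $k, k+1$ where the arm at $k$ has a strictly smaller mean than the arm at $k+1$, then swapping them strictly increases $\mathbb{E}[r(T)]$. Iterating such swaps (bubble sort) terminates at the descending order, proving it is optimal.

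Concretely, fix a round $t$ and a permutation $\boldsymbol{o}$ with $o^{(k)}=a$, $o^{(k+1)}=b$. Let $\boldsymbol{o}'$ be the permutation obtained by swapping these two entries. The key structural observation is that for every position $j \leq k-1$ the contribution to \eqref{eq:r(T)_single} is identical in $\boldsymbol{o}$ and $\boldsymbol{o}'$ (arms at those positions are unchanged), and for every $j \geq k+2$ the prefix product $\prod_{i=1}^{j-1}(1-\mu_{o^{(i)}})$ contains both $(1-\mu_a)$ and $(1-\mu_b)$ as factors and is therefore symmetric in $a,b$, so those contributions are also identical. Hence only positions $k$ and $k+1$ contribute to the difference.

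The next step is a direct computation. Writing $P := \prod_{i=1}^{k-1}(1-\mu_{o^{(i)}})$ for the common prefix, the difference between $\boldsymbol{o}$ and $\boldsymbol{o}'$ at positions $k, k+1$ is
\begin{align*}
 P\bigl[&(1-k\tau)\mu_a + (1-(k+1)\tau)(1-\mu_a)\mu_b \\
 &- (1-k\tau)\mu_b - (1-(k+1)\tau)(1-\mu_b)\mu_a\bigr],
\end{align*}
which, after expanding and cancelling the cross terms $\mu_a\mu_b$, collapses to $P\,\tau\,(\mu_a-\mu_b)$. Therefore the permutation prefers $a$ before $b$ exactly when $\mu_a \geq \mu_b$; any adjacent inversion strictly reduces expected reward by $P\tau(\mu_b-\mu_a) > 0$.

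Finally, I would note that any permutation other than $(1,2,\dots,K)$ contains an adjacent inversion, so one such swap strictly improves the expected reward. Iterating, every permutation is dominated by the descending one, which establishes the lemma. I do not anticipate a real obstacle here; the only point that deserves care is the symmetry argument for positions $j \geq k+2$, which is what makes the adjacent-swap reduction clean and avoids having to compare arbitrary permutations directly.
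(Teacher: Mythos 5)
Your proof is correct, and it is the same basic exchange argument as the paper's, but executed with a genuinely different (and cleaner) choice of swap. The paper swaps two arms in \emph{arbitrary} positions $i<j$; because every position strictly between $i$ and $j$ then sees its prefix product change (the factor $1-\mu_b$ is replaced by $1-\mu_a$), the paper has to bound a sum of intermediate terms through a chain of inequalities before arriving at a lower bound of $\prod_{x<i}(1-\mu_{o^{(x)}})\,\tau\,(\mu_a-\mu_b)$. By restricting to \emph{adjacent} transpositions you eliminate those intermediate positions entirely: your symmetry observation for positions $j\ge k+2$ is exactly what makes the difference collapse to the closed form $P\,\tau\,(\mu_a-\mu_b)$ with no estimation needed, and your algebra checks out. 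The bubble-sort termination step is standard. What the paper's version buys is that a single swap already compares any inverted pair directly; what yours buys is an exact identity in place of several inequalities, which is easier to verify. One small caveat: the improvement is only \emph{weak}, not strict, in the degenerate case where some arm before position $k$ has $\mu=1$ (so $P=0$), or when $\mu_a=\mu_b$; this does not affect the optimality conclusion, but you should say ``does not decrease'' rather than ``strictly increases'' to cover it.
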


\begin{algorithm}
 \caption{Observe-Before-Play UCB (OBP-UCB)}
 \begin{algorithmic}[1]
 \label{alg:single-obp}
 \STATE \textbf{Initialization}: Pull all arms once and update $n_{i}(t)$, $\overline{\mu}_{i}(t)$, $\hat{\mu}_{i}(t)$ for all $i \in [K]$ 
 \WHILE{$t$}
    \STATE $\boldsymbol{o}_t = \text{argsort}(\hat{\mu}_{1}(t), \hat{\mu}_{2}(t), \dots, \hat{\mu}_{K}(t))$;
    \FOR{$i = 1:K$}
        \STATE Observe arm $o_t^{(i)}$'s reward $Y_{o_t^{(i)},t}$;
        \STATE $n_{o_t^{(i)}}(t+1) = n_{o_t^{(i)}}(t) + 1$;
        \STATE $\overline{\mu}_{o_t^{(i)}}(t+1) = (\overline{\mu}_{o_t^{(i)}}(t) n_{o_t^{(i)}}(t) + Y_{o_t^{(i)},t}) / n_{o_t^{(i)}}(t+1)$;
        \IF{$Y_{o_t^{(i)},t} = 1$}
            \STATE Play arm $i$ for this round;
            \STATE $n_{o_t^{(j)}}(t+1) = n_{o_t^{(j)}}(t)$ for all $j > i$;
            \STATE $\overline{\mu}_{o_t^{(j)}}(t+1) = \overline{\mu}_{o_t^{(j)}}(t)$ for all $j > i$;
            \STATE break;
        \ENDIF
    \ENDFOR
    \STATE Update $\hat{\mu}_{i}(t)$ for all $i \in [K]$;
    \STATE $t = t+1$;
 \ENDWHILE
 \end{algorithmic} 
\end{algorithm}

Given this result, we propose an UCB (upper confidence bound)-type online algorithm, Observe-Before-Play UCB (OBP-UCB), to maximize the cumulative expected reward without prior knowledge of the $\{\mu_k\}_{k=1}^{K}$. The OBP-UCB algorithm is formally described in Algorithm \ref{alg:single-obp} and uses UCB values to estimate arm rewards as in traditional MAB algorithms~\cite{UCB1_survey}. Define $\overline{\mu}_{i}(t)$ as the sample average of $\mu_{i}$ up to round $t$ and $n_{i}(t)$ as the number of times that arm $i$ has been observed. Define $\hat{\mu}_{i}(t) := \overline{\mu}_{i}(t) + \sqrt{\frac{2\log t}{n_{i}(t)}}$ as the UCB value of arm $i$ at round $t$.
At each round, the player ranks all the arms $i$ in descending order of $\hat{\mu}_{i}(t)$, and sets that order as $\boldsymbol{o}_t$. The player observes arms starting at $o^{(1)}_{t}$, stopping at the first good arm ($Y_{o_t^{(i)},t} = 1$) or when the list exhausts. She then updates the UCB values and enters the next round. Since we store and update each arm's UCB value, the storage and computing overhead grow only linearly with the number of arms $K$.

We can define and bound the \emph{regret} of this algorithm 
as the difference between the expected reward of the optimal policy (Lemma~\ref{lemma:descend_order}) and that of the real policy:
\begin{equation}\label{eq:R_single_1}
\begin{aligned}
    R(T) :=& \mathbb{E}[r^{*}(T)] - \mathbb{E}[r(T)]\\
    =& \sum_{t=1}^{T}\sum_{k=1}^{K}\Bigg\{(1 - k\,\tau)\mu_{k}\prod_{i=1}^{k-1}(1-\mu_{i}) - (1 - k\,\tau)\mu_{o^{(k)}_{t}}\prod_{i=1}^{k-1}(1-\mu_{o^{(i)}_{t}})\Bigg\}.
\end{aligned}
\end{equation}
\begin{theorem}\label{theorem:single}
The total expected regret can be bounded as: \\
$\mathbb{E}[R(T)] \leq \sum_{i=1}^{K-1}\Bigg\{i\,W_i\sum_{j=i+1}^{K} [\frac{8\log T}{\Delta_{i,j}} + (1 + \frac{\pi^2}{3})\Delta_{i,j}]\Bigg\}$, where $W_k := (1 - k\,\tau)\prod_{i=1}^{k-1}(1-\mu_{i})$ and $\Delta_{i,j} := \mu_i - \mu_j$.
\end{theorem}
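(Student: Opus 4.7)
The plan is to follow the standard two-stage UCB regret analysis: (i) decompose the per-round regret by attributing losses to pairwise ``inversions'' of $\boldsymbol{o}_t$ against the optimal order $(1,2,\dots,K)$ guaranteed by Lemma~\ref{lemma:descend_order}, and (ii) bound the expected number of such inversions using UCB1 concentration.

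For step (i), I would start from the per-round summand inside~\eqref{eq:R_single_1}. The cascading product $\prod_{i<k}(1-\mu_{o^{(i)}_t})$ couples all positions, so a naive position-wise bound overcounts. I would use a hybrid / adjacent-swap argument (or, equivalently, condition on the first position at which $\boldsymbol{o}_t$ deviates from the identity and proceed inductively on $k$) to rewrite the per-round loss as a sum of swap costs: for each position $k$ occupied by an arm $j>k$, the loss is at most $W_k \Delta_{k,j}$, with $W_k=(1-k\tau)\prod_{i<k}(1-\mu_i)$ serving as the position-$k$ weight once the earlier positions have been corrected. Aggregating these contributions across the up to $i$ candidate positions at which arm $j$ could be placed above arm $i$ (consistent with the ranking in line~3 of Algorithm~\ref{alg:single-obp}) produces the factor $i\,W_i\Delta_{i,j}$ appearing in the theorem; the monotone decrease of $W_k$ in $k$ (both $(1-k\tau)$ and the cumulative failure product shrink) justifies majorizing all such contributions by $W_i$.

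For step (ii), I would apply the classical UCB1 bound of Auer et al.~\cite{UCB1_survey}: for any pair $i<j$ (so $\mu_i>\mu_j$),
\begin{equation*}
\mathbb{E}\!\left[\sum_{t=1}^{T}\mathbf{1}\{\hat{\mu}_j(t)\geq \hat{\mu}_i(t)\}\right] \leq \frac{8\log T}{\Delta_{i,j}^2} + 1 + \frac{\pi^2}{3}.
\end{equation*}
Because OBP-UCB ranks arms by UCB value, the event ``arm $j$ appears above arm $i$ in $\boldsymbol{o}_t$'' is exactly $\{\hat{\mu}_j(t)\geq\hat{\mu}_i(t)\}$. Plugging this count into the per-round bound from (i), one factor of $\Delta_{i,j}$ in $i\,W_i\Delta_{i,j}$ cancels against $\Delta_{i,j}^2$ in the logarithmic term, giving $\frac{8\,i\,W_i\log T}{\Delta_{i,j}}$, while the additive $1+\pi^2/3$ contributes $i\,W_i(1+\pi^2/3)\Delta_{i,j}$. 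Summing over $j>i$ and $i=1,\dots,K-1$ yields the claimed expression.

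The main obstacle is the per-round decomposition in step (i): the nested $(1-\mu_{o^{(i)}_t})$ factors make it nontrivial to attribute loss to individual pairs $(i,j)$ cleanly, because reordering earlier positions also alters the product factors used at later positions. The hybrid / adjacent-swap argument familiar from cascading bandits~\cite{kveton2015cascading} is the standard but delicate device needed to obtain a position-weighted decomposition amenable to the subsequent UCB1 concentration step; the remainder of the proof is then a relatively routine combination of the two ingredients.
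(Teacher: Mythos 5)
Your overall architecture (a positionwise decomposition into pairwise inversions followed by UCB1 concentration) matches the paper's, but two of your specific steps are flawed, and together they misplace the source of the factor $i$ in the bound. First, step (i) is much easier than you anticipate: no hybrid or adjacent-swap argument is needed for the regret bound (a swap argument appears only in the proof of Lemma~\ref{lemma:descend_order}). Since $\mu_1\ge\dots\ge\mu_K$, the prefix product $\prod_{x<k}(1-\mu_x)$ over the top $k-1$ arms is the smallest such product over any $k-1$ distinct arms, so
\begin{equation*}
(1-k\tau)\Bigl[\mu_k\prod_{x<k}(1-\mu_x)-\mu_{o_t^{(k)}}\prod_{x<k}\bigl(1-\mu_{o_t^{(x)}}\bigr)\Bigr]\;\le\; W_k\bigl(\mu_k-\mu_{o_t^{(k)}}\bigr),
\end{equation*}
and dropping the nonpositive terms (those with $o_t^{(k)}<k$) immediately gives $R(T)\le\sum_{i<j}W_i\,\Delta_{i,j}\,T_{i,j}$ with $T_{i,j}:=\#\{t:o_t^{(i)}=j\}$. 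Your attempted aggregation over ``the up to $i$ candidate positions,'' justified by majorizing each $W_k\Delta_{k,j}$ by $W_i\Delta_{i,j}$ via the monotone decrease of $W_k$, goes the wrong way: for $k\le i$ we have both $W_k\ge W_i$ and $\Delta_{k,j}\ge\Delta_{i,j}$, so $W_i\Delta_{i,j}$ is a \emph{lower} bound on those contributions, not an upper bound.

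Second, in step (ii) you count the event $\{\hat{\mu}_j(t)\ge\hat{\mu}_i(t)\}$, but the event that actually carries the loss $W_i\Delta_{i,j}$ is ``arm $j$ occupies position $i$,'' which is not the same: arm $j$ can sit at position $i$ while $\hat{\mu}_j(t)<\hat{\mu}_i(t)$ (with $K=3$ and UCB order $(2,3,1)$, arm $3$ is at position $2$ even though $\hat{\mu}_3<\hat{\mu}_2$). What arm $j$ at position $i<j$ does guarantee is that at least one arm $k\in\{1,\dots,i\}$ satisfies $\hat{\mu}_k(t)\le\hat{\mu}_j(t)$, since at most $i-1$ arms can be ranked above arm $j$. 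The paper therefore bounds $T_{i,j}\le\sum_{k=1}^{i}\sum_{t=1}^{T}\mathbbm{1}\{\hat{\mu}_k(t)\le\hat{\mu}_j(t)\}$ and applies the UCB1 count to each of the $i$ summands, using $\Delta_{k,j}\ge\Delta_{i,j}$ in the denominator (where the inequality now points the right way). This union bound over the $i$ possibly-displaced arms is the true origin of the factor $i$; it belongs to the concentration step, not the decomposition. As written, your argument does not establish the link between the per-position losses and the pairwise UCB comparison counts, and it needs this repair to go through.
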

The expected regret $\mathbb{E}[R(T)]$ is upper-bounded in the order of $O(K^2\log T)$, as also shown by~\cite{li2014almost}. However, our proof method is distinct from theirs and preserves the dependence on the arm rewards (through the $W_i$ in Theorem~\ref{theorem:single}). Since $W_k$ converges to 0 as $k\rightarrow\infty$, we expect that the constant in our $O(K^2\log T)$ bound will be small. Numerically, when there are more than 8 arms with expected rewards uniformly drawn from $(0, 1)$, our new regret bound is tighter than the result from~\cite{li2014almost} in 99\% of our experiments. Moreover, unlike the analysis in~\cite{li2014almost}, our regret analysis can be easily generalized to multi-player settings, as we show in the next section. 

Algorithms with better regret order in $T$ can be derived~\cite{combes2015learning}, but the regret bound of their proposed algorithm has a constant term (independent of $T$), $K^{2}\eta^2$, where $\eta = \prod_{i=1}^{K}(1-\mu_{i})^{-1}$. This constant term is exponential in $K$ so it can be significant if $K$ is large. The same work also provides a lower bound in the order of $\Omega(K\log T)$ when the player can only choose less than $K$ arms to pre-observe in each round.

\section{Centralized Multi-player Setting}\label{sec:centralized}
\captionsetup[figure]{labelfont=bf}
\begin{figure}[t]
\centering
\begin{subfigure}[t]{0.48\linewidth}
\centering
	\includegraphics[width=\textwidth]{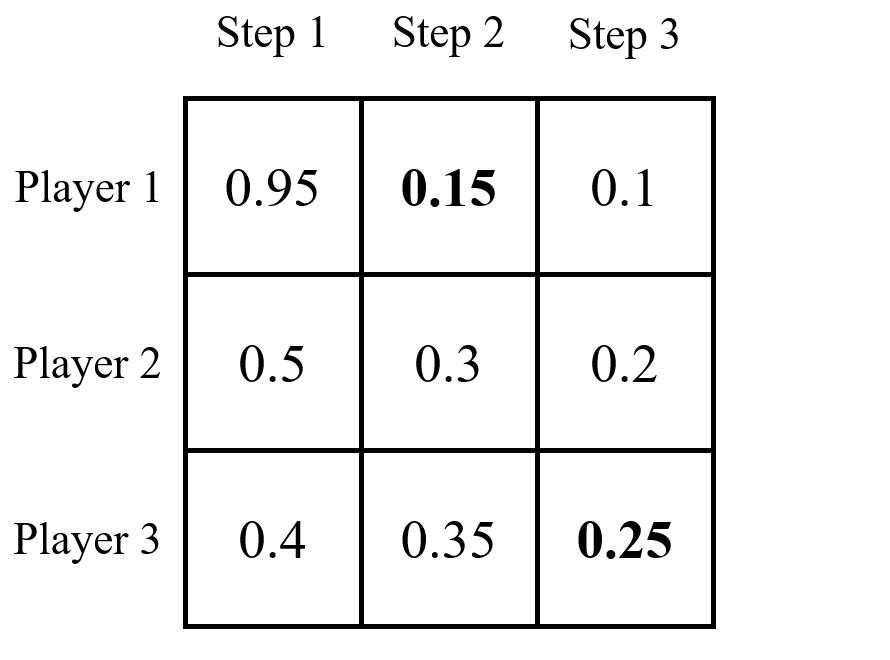}
	\caption{Non-greedy optimal policy.}
	\label{fig:opt_not_greedy}
\end{subfigure}
\begin{subfigure}[t]{0.38\linewidth}
    \centering
	\includegraphics[width=\textwidth]{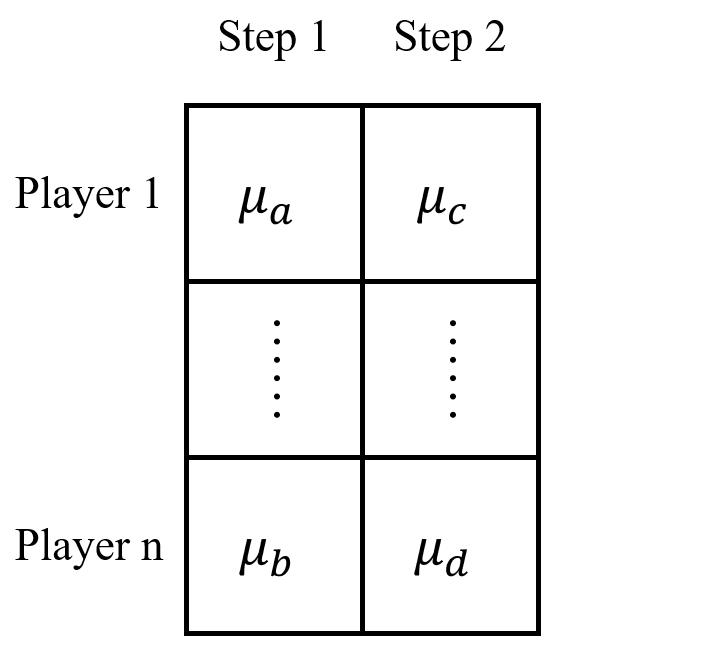}
	\caption{Assigning arms.}
	\label{fig:greedy_2}
\end{subfigure}
\caption{Multi-player observation lists, with rewards in the boxes.}
\end{figure}
In the multi-player setting, we still consider $K$ arms with i.i.d Bernoulli rewards; $Y_{k,t}$ denotes the realized reward of arm $k$ at round $t$, with an expected value $\mu_k \in [0,1]$. There are now $M \geq 1$ players $(M \leq K)$ making decisions on which arms to observe and play in each round. We define a {\bf collision} as two or more users playing the same arm in the same round, forcing them to share that arm's reward or even yielding zero reward for all colliding players, e.g., in CRNs. In this setting, simply running the OBP-UCB algorithm on all players will lead to severe collisions, since all users may tend to choose the same observation list and play the same arm. To prevent this from happening, we first consider the case where a central controller can allocate different arms to different players.

 At each round, the central controller decides pre-observation lists for all players; as in the single-player setting, each player sequentially observes the arms in its list and stops at the first good arm. The players report their observation results to the central controller, which uses them to choose future lists. A \emph{policy} consists of a set of pre-observation lists for all players.
Define $\boldsymbol{o}_{m,t} := (o^{(1)}_{m,t},o^{(2)}_{m,t},\dots,o^{(i)}_{m,t}, \dots)$ as the \textbf{pre-observation list} of player $m$ at round $t$, where $o^{(i)}_{m,t}$ represents the $i^{\text{th}}$ arm to be observed. 
The length of $\boldsymbol{o}_{m,t}$ can be less than $K$. Since collisions will always decrease the total reward, we only consider \emph{collision-free policies}, i.e., those in which players' pre-observation lists are disjoint. Policies that allow collisions are impractical in CRNs as they waste limited transmission energy and defeat the purpose of pre-observations (sensing channel availability), which allow users to find an available channel without colliding with primary users.
The expected overall reward of all players is then:
\begin{equation}\label{eq:reward_central}
    \mathbb{E}[r(T)] = \sum_{t=1}^{T}\sum_{m=1}^{M}\sum_{k=1}^{|\boldsymbol{o}_{m,t}|}\left\{(1 - k\tau)\mu_{o^{(k)}_{m,t}}\prod_{i=1}^{k-1}(1-\mu_{o^{(i)}_{m,t}})\right\}.
\end{equation}


Unlike in the single-player setting, the collision-free requirement now makes the expected reward for one player dependent on the decisions of other players.
Intuitively, we would expect that a policy of always using better arms in earlier steps would perform well.
We can in fact generalize Lemma \ref{lemma:descend_order} from the single-player setting:
\begin{lemma}\label{lemma:descend-multi}
Given a pre-observation list $\boldsymbol{o}_{m,t}$ for time $t$, player $m$ maximizes its expected reward at time $t$ by observing the arms in descending order of their rewards.
\end{lemma}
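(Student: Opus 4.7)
The plan is to reduce this to the single-player argument already used for Lemma~\ref{lemma:descend_order}. Because we restrict to collision-free policies, the pre-observation lists of different players are disjoint, so player $m$'s contribution to equation~(\ref{eq:reward_central}) depends only on $\boldsymbol{o}_{m,t}$ and the means $\{\mu_k\}$, not on the choices of other players. Fixing the set of arms in player $m$'s list, choosing an ordering is therefore structurally identical to the single-player setup of Section~\ref{sec:single} applied to the sub-instance of size $|\boldsymbol{o}_{m,t}|$.

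I would then carry out a standard adjacent-swap (exchange) argument. Let $\boldsymbol{o}$ be any ordering of the arms assigned to player $m$ that is not sorted in descending order of means; then there exist adjacent positions $j, j+1$ with $\mu_a < \mu_b$, where $a := o^{(j)}$ and $b := o^{(j+1)}$. Let $\boldsymbol{o}'$ be obtained from $\boldsymbol{o}$ by swapping these two entries, and compare the expected rewards under $\boldsymbol{o}$ and $\boldsymbol{o}'$.

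The key calculation splits player $m$'s expected reward into three pieces corresponding to positions $i<j$, positions $i\in\{j,j+1\}$, and positions $i>j+1$. The first piece is identical under $\boldsymbol{o}$ and $\boldsymbol{o}'$ because the prefix is unchanged. The third piece is also identical because the factor $\prod_{l=1}^{i-1}(1-\mu_{o^{(l)}})$ at any position $i>j+1$ contains both $(1-\mu_a)$ and $(1-\mu_b)$ and is thus symmetric in $a$ and $b$. A one-line expansion of the remaining middle piece yields
\[
\text{(reward of } \boldsymbol{o}'\text{)} - \text{(reward of } \boldsymbol{o}\text{)} \;=\; P\,\tau\,(\mu_b - \mu_a),
\]
where $P := \prod_{l=1}^{j-1}(1-\mu_{o^{(l)}}) \ge 0$. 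Since $\mu_b > \mu_a$, the swap weakly increases the expected reward (strictly so whenever $P>0$).

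Iterating this bubble-sort-style improvement transforms any non-descending ordering into the descending one without ever decreasing the objective, so observing in descending order of means attains the maximum. The only subtle point is confirming that the tail contributions past position $j+1$ cancel out; this is exactly the symmetry of $(1-\mu_a)(1-\mu_b)$ noted above, after which the remaining algebra is a short direct computation.
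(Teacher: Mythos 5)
Your proof is correct and follows essentially the same exchange-argument strategy as the paper, which proves Lemma~\ref{lemma:descend_order} and Lemma~\ref{lemma:descend-multi} together by swapping a misordered pair of arms and showing the expected reward cannot decrease. The only difference is that you restrict to adjacent swaps, which turns the paper's chain of inequalities for a general swap into the exact identity $P\,\tau\,(\mu_b-\mu_a)$; your reduction to the single-player case via the disjointness of collision-free lists matches the paper's (brief) justification of the multi-player statement as well.
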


With Lemma~\ref{lemma:descend-multi}, we can consider the offline optimization of the centralized multi-player bandits problem. With the full information of expected rewards of all arms, i.e., $\{\mu_{i}\}^{K}_{i=1}$, the central controller allocates disjoint arm sets to different players, aiming to maximize the expected overall reward shown in \eqref{eq:reward_central}. We show in Theorem \ref{thm:multi-player-np-hard} that the offline problem is NP-hard.
\begin{theorem}\label{thm:multi-player-np-hard}
The offline problem of our centralized multi-player setting is NP-hard.
\end{theorem}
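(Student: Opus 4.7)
The plan is to reduce from 3-\textsc{Partition}, which is strongly NP-hard. Recall that an instance consists of $3n$ positive integers $w_1, \ldots, w_{3n}$ with $T/4 < w_i < T/2$ and $\sum_i w_i = nT$; the question is whether the index set admits a partition into $n$ triples each summing exactly to $T$. Strong NP-hardness lets me assume the $w_i$ are polynomially bounded, which will keep every quantity in the reduction representable in polynomial bits.

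Given such an instance, I would construct an offline centralized MAB-with-pre-observations instance with $K = 3n$ arms, $M = n$ players, Bernoulli means $\mu_i := 1 - 2^{-w_i}$ (so that $1 - \mu_i = 2^{-w_i}$), and pre-observation cost $\tau := 2^{-T}/(12 n^2)$. By Lemma~\ref{lemma:descend-multi} the in-list order is forced, so the only free choice is the partition $L_1, \ldots, L_n$ of the arms. Since $1-k\tau > 0$ for $k \le K$, extending any list strictly increases its contribution, so the optimum assigns every arm.

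Momentarily setting $\tau = 0$, the objective in \eqref{eq:reward_central} collapses to $n - \sum_m 2^{-W_m}$ where $W_m := \sum_{i \in L_m} w_i$, subject to the fixed total $\sum_m W_m = nT$. Strict convexity of $x \mapsto 2^{-x}$ together with Jensen's inequality shows that $\sum_m 2^{-W_m}$ is minimized over integer choices exactly when $W_m = T$ for every $m$. The range constraint $T/4 < w_i < T/2$ rules out groups of size $1$ or $2$ (their sum cannot reach $T$) and groups of size $\ge 4$ (their sum strictly exceeds $T$), so $W_m = T$ forces a triple. Hence the $\tau = 0$ optimum is attained iff 3-\textsc{Partition} is a YES instance. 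Writing $d_m := W_m - T$ (integers summing to $0$), one checks that over nonzero integer configurations $\sum_m 2^{-d_m} \ge n + \tfrac12$, the minimum being realized by a single $+1/-1$ pair; so the YES/NO gap in the $\tau = 0$ objective is at least $\tfrac12 \cdot 2^{-T}$.

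For the actual cost $\tau > 0$, the true objective differs from its $\tau = 0$ counterpart by at most $\tau \sum_m \mathbb{E}[I_m \cdot \mathbb{I}\{\text{success}\}] \le \tau M K = 3 n^2 \tau = \tfrac14 \cdot 2^{-T}$. The YES and NO cases therefore remain separated by at least $\tfrac14 \cdot 2^{-T} > 0$ in the true objective, so an oracle for the multi-player offline problem decides 3-\textsc{Partition} by comparing its returned optimum against a threshold chosen in this gap. The conceptual core of the proof, Jensen's inequality plus the triple-rigidity forced by $T/4 < w_i < T/2$, is short; the main obstacle is the bit-level bookkeeping that keeps $\tau$ simultaneously positive, polynomial-bit, strictly below the exponentially small convexity gap, and compatible with the model assumption $0 < K\tau < 1$.
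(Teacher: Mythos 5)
Your reduction is correct, but it takes a genuinely different route from the paper. The paper's proof is a one-step citation: it observes that with $\tau=0$ the per-player reward telescopes to $1-\prod_{i\in L_m}(1-\mu_i)$, so the offline problem contains the Weapon Target Assignment problem with identical targets, whose NP-hardness is invoked from the literature. You instead reduce directly from strongly NP-hard 3-\textsc{Partition}, setting $1-\mu_i=2^{-w_i}$ so the telescoped objective becomes $n-\sum_m 2^{-W_m}$ and Jensen's inequality plus the $T/4<w_i<T/2$ rigidity identifies balanced triples; in effect you have inlined (a version of) the hardness proof that the paper outsources to its WTA citation, which makes your argument self-contained at the cost of the quantitative bookkeeping around the $2^{-T}$ convexity gap. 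Your proof also buys something the paper's does not: the paper reduces to the special case $\tau=0$, which is technically excluded by the standing assumption $0<K\tau<1$, whereas your perturbation step (bounding the cost contribution by $\tau MK=\tfrac14\cdot 2^{-T}$ and choosing $\tau$ inside the gap) shows the hardness survives for a strictly positive, polynomial-bit $\tau$ consistent with the model. Two small points you wave at but should pin down: the inequality $\sum_m 2^{-d_m}\ge n+\tfrac12$ for nonzero integer deviations does hold (writing $S$ for the total positive deviation, the negative entries contribute excess at least $S$ while the positive entries lose at most $S/2$ since $1-2^{-a}\le a/2$ for integers $a\ge1$), and the NO-case upper bound must cover partial assignments as well, which your observation that the optimum always assigns every arm (each extension contributes $(1-k\tau)\mu_j\prod(1-\mu)>0$) takes care of.
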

\begin{proof}
Define $x_{ij}=1$ if the central controller allocates arm $j$ to player $i$ and $0$ otherwise. The offline optimization problem can be formulated as:
\begin{align*}
&\text{max}
& & \sum_{i=1}^{M} \sum_{j=1}^{K}\Big\{\big[1 - (\sum_{k<j}x_{ik} + 1)\tau\big]x_{ij}\mu_{j}\prod_{k<j}(1-x_{ik}\mu_{k})\Big\} \notag\\
&\text{s.t.}
& & x_{ij} \in \{0,1\},\\
&&& \sum_{i=1}^{M} x_{ij} \le 1, \; j = 1, \ldots, K,
\end{align*}
where we define $\sum_{\emptyset} := 0$ and $\prod_{\emptyset} := 1$. 
We show the Weapon Target Assignment (WTA) problem~\cite{ahuja2007exact} with identical targets, which is NP-hard~\cite{proof_wta}, can be reduced in polynomial time to a special case of our problem with $\tau=0$: The WTA problem with identical targets aims to maximize the sum of expected damage done to all targets (mapped to be players), each of which can be targeted by possibly multiple weapons (mapped to be channels), where each weapon can only be assigned to at most one target and weapons of the same type have the same probability (mapped to be $\mu_k$) to successfully destroy any target. Then, it is equivalent to maximizing the expected reward of all players when $\tau=0$ in our problem.
\end{proof}

Although it is hard to find the exact offline optimal policy, Lemma~\ref{lemma:descend-multi} suggests that a {\textbf{collision-free greedy}} policy, which we also refer to as a {\emph{greedy policy}}, might be closed to the optimal one.
We first define the $i^{\text{th}}$ \textbf{observation step} in a policy as the set of arms in the $i^{\text{th}}$ positions of the players' observation lists, denoted by $\boldsymbol{s}_{i,t} := (o^{(i)}_{1,t},o^{(i)}_{2,t},\dots,o^{(i)}_{M,t})$ for each round $t$.
We define a {\emph{greedy policy}} as one in which at each observation step, the players greedily choose the arms with highest expected rewards from all arms not previously observed. Formally, assuming without loss of generality that $\mu_1 \geq \mu_2 \geq \dots \geq \mu_K$, in the $i$th observation step, players should observe different arms from the set $\boldsymbol{s}_{i,t} = \left\{(i - 1)M + 1, (i - 1)M + 2,\dots,iM\right\}$. 
In the simple \textbf{greedy-sorted policy}, for instance, player $m$ will choose arm $(i-1)M+m$ in the $i^{\text{th}}$ observation step. 
A potentially better candidate is the \textbf{greedy-reverse policy}: at each observation step, arms are allocated to players in the reverse order of the probability they observe an available arm from previous observation steps. Formally, in the $i$th observation step, arm $(i - 1)M + j$ is assigned to the player $m$ with the $j$th highest value of $\Pi_{l = 1}^{i - 1}(1 - \mu_{\boldsymbol{o}_{m,t}^{(l)}})$, or the probability player $m$ has yet not found an available arm. Experiments show that when there are 3 players and 9 arms with expected rewards uniformly drawn from $(0, 1)$, the greedy-reverse policy is the optimal greedy policy 90\% of the time. In fact,
%
\begin{lemma}\label{lemma:greedy_2}
When $K \leq 2M$, the optimal policy is the greedy-reverse policy.
\end{lemma}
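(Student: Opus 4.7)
The plan is to combine structural exchange arguments with a rearrangement step. Throughout I invoke Lemma~\ref{lemma:descend-multi} to assume that within each player's list the arms appear in decreasing order of expected reward, and I write $S_1$ (resp.\ $S_2$) for the multiset of first-slot (resp.\ second-slot) arms across all players.

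First I would establish that in any optimal policy no player carries three or more arms and no arm is left unobserved. If some list has length $\geq 3$, then because $K \leq 2M$ a pigeonhole argument provides another player with a list of length $\leq 1$, and moving the tail arm $c$ of the long list to that shorter list strictly improves the total expected reward: the third-slot coefficient $(1-3\tau)(1-\mu_a)(1-\mu_b)$ is strictly smaller than the first-slot coefficient $(1-\tau)$ and than any second-slot coefficient $(1-2\tau)(1-\mu_{\cdot})$ for $\mu_{\cdot} \in [0,1]$, using only $K\tau < 1$. A similar comparison shows that attaching any unobserved arm to a player with fewer than two arms strictly increases the reward. Together these give $|S_1| = \min(K,M)$, $|S_2| = \max(K-M,0)$, and $S_1 \cup S_2 = \{1,\ldots,K\}$.

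Next I would show $S_1 = \{1,\ldots,M\}$ when $K \geq M$ (the case $K < M$ is analogous with no second step). Suppose some $a \in S_1$ and $b \in S_2$ satisfy $\mu_a < \mu_b$; by Lemma~\ref{lemma:descend-multi} they live on different players, say player~$1$ with $a$ as first arm and player~$2$ with $(a', b)$ so $\mu_{a'} \geq \mu_b > \mu_a$. Swapping $a$ and $b$ (and reordering by Lemma~\ref{lemma:descend-multi}) changes the total by $(\mu_b - \mu_a)\bigl[\tau + (1-2\tau)\mu_{a'}\bigr] > 0$ when player~$1$ had only $a$, and by $(\mu_b-\mu_a)\bigl[(1-\tau) - (1-2\tau)(\mu_c + 1 - \mu_{a'})\bigr] > 0$ when player~$1$ had a second arm $c$ (positivity follows from $\mu_c \leq \mu_a < \mu_{a'}$, hence $\mu_c + 1 - \mu_{a'} < 1 < (1-\tau)/(1-2\tau)$). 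Both cases force $S_1$ to contain the top $M$ arms.

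With $S_1, S_2$ fixed, algebra reduces the total expected reward to
\[
R = (1-\tau)\sum_{k=1}^{K}\mu_k - \tau\!\sum_{k\in S_2}\!\mu_k - (1-2\tau)\!\sum_{(a,b)\text{ paired}}\!\mu_a\mu_b,
\]
whose first two sums are now constants. Maximizing $R$ amounts to minimizing $\sum \mu_a\mu_b$ over all matchings between $S_2$ and size-$|S_2|$ subsets of $S_1$. A pairwise swap (or the rearrangement inequality) shows that the minimum is achieved when the $L = K-M$ smallest entries of $S_1$ are the matched first arms, and the largest $\mu_b$ is paired with the smallest matched $\mu_a$, the second largest with the second smallest, and so on: swapping any two pairs $(a_1,b_1),(a_2,b_2)$ with $\mu_{a_1}>\mu_{a_2}$ and $\mu_{b_1}>\mu_{b_2}$ changes $\sum \mu_a\mu_b$ by $(\mu_{a_1}-\mu_{a_2})(\mu_{b_1}-\mu_{b_2})>0$. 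This rule is exactly the greedy-reverse pairing. The main obstacle is the structural part of the argument (first two paragraphs): checking positivity of each exchange for arbitrary $\mu_k \in [0,1]$ under the single constraint $K\tau < 1$ requires careful case splits; the final rearrangement step is then essentially routine, its applicability guaranteed by $(1-2\tau) > 0$, which follows from $K \geq 2$ and $K\tau < 1$.
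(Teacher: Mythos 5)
Your first structural step contains a genuine error. You claim that the third-slot coefficient $(1-3\tau)(1-\mu_a)(1-\mu_b)$ is smaller than \emph{any} second-slot coefficient $(1-2\tau)(1-\mu_d)$ for $\mu_d\in[0,1]$, using only $K\tau<1$. This is false: take $\mu_a=\mu_b=0$ and $\mu_d$ close to $1$, so the left side is near $1-3\tau$ while the right side is near $0$. Worse, the structural claim you are trying to prove (that an optimal allocation never gives a player three or more arms) is itself false if lists of unequal length are permitted, which the offline formulation in Theorem~\ref{thm:multi-player-np-hard} does allow. For example, with $K=4$, $M=2$, $\tau=0.01$, $\mu_1=0.9$ and $\mu_2=\mu_3=\mu_4=0.1$, giving player~1 only arm~1 and player~2 the list $(2,3,4)$ yields total expected reward about $1.157$, whereas the greedy-reverse allocation $(1,4)$, $(2,3)$ yields about $1.088$: piling arms onto the player whose first arm is weak beats splitting them evenly, because the player holding arm~1 almost never reaches its second slot. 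So the lemma can only be read within the policy class the paper implicitly works in, where every player observes exactly $L=K/M\le 2$ arms (padding with virtual arms); under that restriction your first paragraph is unnecessary and should simply be dropped rather than repaired.

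Once you restrict to two equal-length slots per player, your remaining two steps are correct: the exchange argument showing $S_1=\{1,\dots,M\}$ matches the paper's own swap computation (your gap $(\mu_b-\mu_a)\bigl[(1-\tau)-(1-2\tau)(\mu_c+1-\mu_{a'})\bigr]$ is the same quantity the paper bounds below by $\tau(\mu_a-\mu_d)$), and your reduction of the within-step assignment to minimizing $\sum\mu_a\mu_b$ over matchings, settled by the rearrangement inequality, is sound since $1-2\tau>0$. In fact this last step is a genuine improvement over the paper: the paper's proof stops at ``the optimal policy is a greedy policy'' and never argues why the \emph{reverse} pairing is the best greedy one, whereas your rearrangement argument supplies exactly that missing piece. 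So the verdict is mixed: delete or correctly scope the first paragraph, and the rest stands and is more complete than the paper's own argument.
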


In general, the optimal policy may not be the greedy-reverse one, or even a greedy policy. Figure~\ref{fig:opt_not_greedy} shows such a counter-intuitive example. In this example, player 1 should choose the arm with 0.15 expected reward, not the one with 0.25 expected reward, in step 2. Player 1 should reserve the higher-reward arm for player 3 in a later step, as player 3 has a lower chance of finding a good arm in steps 1 or 2. In practice, we expect these examples to be rare; they occur less than 30\% of the time in simulation.
Thus, we design an algorithm that allocates arms to players according to a specified greedy policy (e.g., greedy-sorted) and bound its regret.


\begin{algorithm}[tb]
 \caption{Centralized Multi-Player OBP (C-MP-OBP)}
 \begin{algorithmic}[1]
 \label{alg:c-mp-obp}
 \STATE \textbf{Initialization}: Pull all arms once and update $n_{i}(t)$, $\overline{\mu}_{i}(t)$, $\hat{\mu}_{i}(t)$ for all $i \in [K]$ 
 \WHILE{$t$}
    \STATE $\boldsymbol{\alpha} = \text{argsort}(\hat{\mu}_{1}(t), \hat{\mu}_{2}(t), \dots, \hat{\mu}_{K}(t))$;
    \FOR{$i = 1:L$}
        \STATE $\boldsymbol{s}_{i,t} = \boldsymbol{\alpha}[(i-1)*M+1\,:\,i*M]$
    \ENDFOR
    \FOR{$m = 1:M$}
        \FOR{$i = 1:L$}
            \STATE Observe arm $\boldsymbol{s}_{i,t}[m]$'s reward $Y_{\boldsymbol{s}_{i,t}[m],t}$;
            \STATE $n_{\boldsymbol{s}_{i,t}[m]}(t+1) = n_{\boldsymbol{s}_{i,t}[m]}(t) + 1$;
            \STATE $\overline{\mu}_{\boldsymbol{s}_{i,t}[m]}(t+1)$
            \STATE $= \left(\overline{\mu}_{\boldsymbol{s}_{i,t}[m]}(t) + Y_{\boldsymbol{s}_{i,t}[m],t}\right) / n_{\boldsymbol{s}_{i,t}[m]}(t+1)$;
            \IF{$Y_{\boldsymbol{s}_{i,t}[m],t} = 1$}
                \STATE Player $m$ plays arm $\boldsymbol{s}_{i,t}[m]$ for this round;
                \STATE $n_{\boldsymbol{s}_{j,t}[m]}(t+1) = n_{\boldsymbol{s}_{j,t}[m]}(t)$ for all $j > i$;
                \STATE $\overline{\mu}_{\boldsymbol{s}_{j,t}[m]}(t+1) = \overline{\mu}_{\boldsymbol{s}_{j,t}[m]}(t)$ for all $j > i$;
                \STATE break;
            \ENDIF
        \ENDFOR
    \ENDFOR
    \STATE Update $\hat{\mu}_{i}(t)$ for all $i \in [K]$;
    \STATE $t = t+1$;
 \ENDWHILE
 \end{algorithmic} 
\end{algorithm}

We propose an UCB-type online algorithm, \textbf{Centralized Multi-Player Observe-Before-Play} (C-MP-OBP), to learn a greedy policy without prior knowledge of the expected rewards $\{\mu_k\}_{k=1}^{K}$. The C-MP-OBP algorithm is described in Algorithm 1, generalizing the single-player setting. 
To simplify the discussion, we assume $K/M = L$, i.e., each player will have an observation list of the same length, $L$, when using a greedy policy. Note that if $K$ is not a multiple of $M$, we can introduce virtual arms with zero rewards to ensure $K/M = L$. At each round $t$, the central controller ranks all the arms in the descending order of $\hat{\mu}_{i}(t)$, the UCB value of arm $i$ at round $t$, and saves that order as $\boldsymbol{\alpha}$. Then it sets the first $M$ arms in $\boldsymbol{\alpha}$, $\boldsymbol{\alpha}[1:M]$, as $\boldsymbol{s}_{1,t}$, the second $M$ arms in $\boldsymbol{\alpha}$, $\boldsymbol{\alpha}[M+1:2M]$ as $\boldsymbol{s}_{2,t}$, and so on, assigning the arms in each list to players according to the specified greedy policy. Each player $m$'s observation list is then $\boldsymbol{o}_{m,t} = \left(\boldsymbol{s}_{1,t}[m],\ldots,\boldsymbol{s}_{L,t}[m]\right)$. 
At the end of this round, the central controller aggregates all players' observations to update the UCB values and enter the next round.

We define the \emph{regret}, $R(T) := \mathbb{E}[r^{*}(T)] - \mathbb{E}[r(T)]$, as the difference between the expected reward of the target policy and that of C-MP-OBP algorithm:
\begin{align}
    R(T) 
    = \sum_{t,m,k=1}^{T,M,L}
    \Bigg\{&(1 - k\tau)\mu_{(k-1)M+m}\prod_{i=1}^{k-1}(1-\mu_{(i-1)M+m}) \nonumber -(1 - k\tau)\mu_{o^{(k)}_{m,t}}\prod_{i=1}^{k-1}(1-\mu_{o^{(i)}_{m,t}})\Bigg\}.\label{eq:R_multi}\\
\end{align}
Defining $c_{\mu} := \frac{\mu_{\text{max}}}{\Delta_{\text{min}}}$, we show the following regret bound:
\begin{theorem}\label{theorem:multi}
The expected regret of C-MP-OBP is \\
$\mathbb{E}[R(T)]\leq c_{\mu} K^2 (L^2+L)\left(\frac{8\log T}{\Delta_{\min}} + (1 + \frac{\pi^2}{3})\Delta_{\max}\right)$, where $\Delta_{\max} = \underset{i< j}{\max}\,\mu_i-\mu_j$, $\Delta_{\min} = \underset{i< j}{\min}\,\mu_i-\mu_j$.
\end{theorem}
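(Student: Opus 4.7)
The plan is to generalize the proof of Theorem~\ref{theorem:single} to the centralized multi-player setting by reducing the regret to pairwise ranking errors in the UCB estimates. When the controller's UCB-sorted vector $\boldsymbol{\alpha}$ at round $t$ coincides with the true ranking $(1,2,\ldots,K)$, C-MP-OBP's assignment matches the greedy-sorted target policy and incurs zero instantaneous regret. Hence only rounds on which some pair $(i,j)$ with $\mu_i > \mu_j$ satisfies $\hat{\mu}_j(t) > \hat{\mu}_i(t)$ can contribute, and the argument splits into (a) bounding the per-round regret conditional on a misranking and (b) bounding the expected number of rounds on which each pair is misranked.

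For step (a), I would expand each summand of \eqref{eq:R_multi} by telescoping the products $\prod_{i=1}^{k-1}(1-\mu_{o^{(i)}_{m,t}})$ against their target counterparts. A single misranking of arms $i$ and $j$ directly perturbs two player--position slots, but because the product factor at position $k$ depends on all earlier positions in the same player's list, the error also propagates multiplicatively through every subsequent position of each affected player. Bounding each resulting term by $\mu_{\max}$ and accounting for the $O(L)$ downstream positions within one list as well as the $O(L)$ positions at which the original misranking can occur yields an $O(L^2+L)$ factor per misranked pair, times an indicator that some pairwise misranking occurs.

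For step (b), I would invoke the standard UCB1 concentration argument of Auer et al.\ to show that each pair $(i,j)$ is misranked on at most $\tfrac{8\log T}{\Delta_{i,j}^2}+(1+\pi^2/3)$ rounds in expectation. In C-MP-OBP every arm placed at position $k$ of some player's list is observed whenever that player has not yet stopped at an earlier available arm, so each arm's counter $n_i(t)$ grows with a constant lower-bounded frequency, and the Chernoff--Hoeffding tail bound that drives single-player UCB1 applies to the centralized estimates. Summing over all $\binom{K}{2}=O(K^2)$ pairs, replacing $\Delta_{i,j}$ in the denominator by $\Delta_{\min}$ and in the numerator of the constant term by $\Delta_{\max}$, and absorbing the worst-case per-round scaling $\mu_{\max}/\Delta_{\min}$ into $c_{\mu}$, delivers the bound in Theorem~\ref{theorem:multi}.

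The main obstacle is the propagation of a single pairwise ranking error through the coupled observation lists of multiple players. Unlike the single-player case, where a UCB misranking perturbs exactly two positions of one list, here a swap in the centralized sort $\boldsymbol{\alpha}$ can displace arms in different players' lists, and the downstream multiplicative factors $\prod(1-\mu)$ in the expected reward amplify the effect across all subsequent positions of each affected player. Carefully telescoping these coupled products to expose the individual gap $\Delta_{i,j}$ in the count of bad rounds---rather than losing it uniformly to $\Delta_{\min}$---while bounding the per-round magnitude by the $O(L^2+L)$ factor is the delicate technical step that determines the final constants.
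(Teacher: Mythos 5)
Your proposal follows essentially the same route as the paper: both reduce the regret to counts of pairwise UCB misrankings bounded by the standard UCB1 argument (each pair contributing at most $\frac{8\log T}{\Delta_{i,j}^2}+1+\frac{\pi^2}{3}$ bad rounds in expectation), multiply by a worst-case per-round loss of order $L\,\mu_{\max}$, and absorb $\mu_{\max}/\Delta_{\min}$ into $c_{\mu}$. The paper simply makes your ``propagation'' accounting explicit by decomposing the regret into arms placed in the wrong observation step (the source of the $L^2$ factor, via $R^{s}_{i,k}$ with per-round loss $R^{\max}_{k}\leq (L-k+1)\mu_{(k-1)M+1}$) versus arms placed in the wrong position within the correct step (the source of the $L$ factor, via $R^{a}_{i,k}$).
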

The expected regret $E[R(T)]$ is upper bounded in the order of $O(K^2L^2\log T)$
, compared to $O(K^2\log T)$ in the single-player setting. Thus, we incur a ``penalty'' of $L^2$ in the regret order, due to sub-optimal pre-observations' impact on the subsequent pre-observations of other users. We note that, if pre-observations are not allowed, we can adapt the proof of Theorem~\ref{theorem:multi} to match the lower bound of $O(KM\log T)$ given by \cite{besson2018multi}.

\section{Distributed Multi-player Setting}\label{sec:distributed}

We finally consider the scenario without a central controller or any means of communication between players. In the CRN setting, for instance, small Internet-of-Things devices may not be able to tolerate the overhead of communication with a central server. The centralized C-MP-OBP policy is then infeasible, 
and specifying a collision-free policy is difficult, as the players make their decisions independently. We propose a \textbf{Distributed Multi-Player Observe-Before-Play} (D-MP-OBP) online algorithm in which each player distributedly learns a ``good'' policy that effectively avoids collisions with others. Specifically, it converges to one of the offline collision-free greedy policies that we defined in Section \ref{sec:centralized}; we then show that D-MP-OBP can be adapted to achieve a pre-specified greedy policy, e.g., greedy-reverse. 
To facilitate the discussion, we define $\eta_k^{(t)}$ as an indicator that equals $1$ if more than one player plays arm $k$ in round $t$ and $0$ otherwise. As in the centralized setting, $o^{(k)}_{m,t}$ denotes the $k^{\text th}$ arm in player $m$'s observation list at round $t$. 

The D-MP-OBP algorithm is shown in Algorithm \ref{alg:d-mp-obp}. As in the C-MP-OBP algorithm, in each round, each player independently updates its estimate of the expected reward $(\mu_{k})$ for each arm $k$ using the UCB of $\mu_{k}$. Each player then sorts the estimated $\{\mu_{k}\}_{k=1}^K$ into descending order and groups the $K$ arms into $L$ sets. We still use $\boldsymbol{s}_{i,t}$ to denote the list of arms that the players observe in step $i$ at round $t$. Since users may have different lists $\boldsymbol{s}_{i,t}$ depending on their prior observations, we cannot simply allocate the arms in $\boldsymbol{s}_{i,t}$ to users. Instead, the users follow a randomized strategy in each step $i$ at round $t$. If there was a collision with another player on arm $i$ at round $t-1$ or the arm chosen in round $t-1$ does not belong to her own set $\boldsymbol{s}_{i,t}$, then the player uniformly at random chooses an arm from her $\boldsymbol{s}_{i,t}$ to observe. Otherwise, the player observes the same arm as she did in step $i$ in round $t-1$. If the arm is observed to be available, the player plays it and updates the immediate reward and the UCB of the arm. Otherwise, she continues to the next observation step. Note that this policy does not require any player communication. 

\begin{algorithm}[tb]
 \caption{Distributed Multi-Player OBP (D-MP-OBP)}
 \begin{algorithmic}[1]
 \label{alg:d-mp-obp}
 \STATE \textbf{Initialization}: Pull all arms once and update $n_{i}(t)$, $\overline{\mu}_{i}(t)$, $\hat{\mu}_{i}(t)$ for all $i \in [K]$ 
 \WHILE{$t$}
    \STATE $\boldsymbol{\alpha} = \text{argsort}(\hat{\mu}_{1}(t), \hat{\mu}_{2}(t), \dots, \hat{\mu}_{K}(t))$;
    \FOR{$i = 1:L$}
        \STATE $\boldsymbol{s}_{i,t} = \boldsymbol{\alpha}[(i-1)*M+1\,:\,i*M]$
    \ENDFOR
        \FOR{$i = 1:L$}
            \IF{$m^*_i = 0$ OR $m^*_i \notin \boldsymbol{s}_{i,t}$}
                \STATE The player uniformly at random selects an arm from $\boldsymbol{s}_{i,t}$ to observe and record the index of the chosen arm as $m^*_i$;
            \ENDIF
            \STATE Observe the reward $Y_{\boldsymbol{s}_{i,t}[m^*_i],t}$;
            \STATE $n_{\boldsymbol{s}_{i,t}[m^*_i]}(t+1) = n_{\boldsymbol{s}_{i,t}[m^*_i]}(t) + 1$;
            \STATE $\overline{\mu}_{\boldsymbol{s}_{i,t}[m^*_i]}(t+1)$
            \STATE $= \left(\overline{\mu}_{\boldsymbol{s}_{i,t}[m^*_i]}(t) + Y_{\boldsymbol{s}_{i,t}[m^*_i],t}\right) / n_{\boldsymbol{s}_{i,t}[m^*_i]}(t+1)$;
            \IF{$Y_{\boldsymbol{s}_{i,t}[m^*_i],t} = 1$}
                \STATE The player plays arm $\boldsymbol{s}_{i,t}[m*]$ for this round;
                \STATE $n_{\boldsymbol{s}_{j,t}[m*]}(t+1) = n_{\boldsymbol{s}_{j,t}[m*]}(t)$ for all $j > i$;
                \STATE $\overline{\mu}_{\boldsymbol{s}_{j,t}[m*]}(t+1) = \overline{\mu}_{\boldsymbol{s}_{j,t}[m*]}(t)$ for all $j > i$;
                \STATE break;
            \ENDIF
        \ENDFOR
        \IF{a collision occurs}
            \STATE Update $m^*_i = 0$;
        \ENDIF
    \STATE Update $\hat{\mu}_{i}(t)$ for all $i \in [K]$;
    \STATE $t = t+1$;
 \ENDWHILE
 \end{algorithmic} 
\end{algorithm}

To evaluate D-MP-OBP, we define a performance metric, $\text{Loss}(T)$, to be the maximum difference in total reward over $T$ rounds between any collision-free greedy policy and the reward achieved by D-MP-OBP.
%
Thus, unlike the regret $\mathbb{E}[R(T)]$ defined for our C-MP-OBP policy, $\mathbb{E}[\text{Loss}(T)]$ does not target a specific greedy policy. Moreover, unlike C-MP-OBP, our D-MP-OBP algorithm provides fairness in expectation for all players, as they have equal opportunities to use the best arms in each observation step.

\begin{theorem}\label{theorem:distributed}
The total expected loss, $\mathbb{E}[\text{Loss}(T)]$, of our distributed algorithm D-MP-OBP is logarithmic in $T$. 
\end{theorem}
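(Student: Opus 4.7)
The plan is to decompose $\mathbb{E}[\mathrm{Loss}(T)]$ into a \emph{learning loss}, incurred on rounds where some player's UCB-based ranking disagrees with the true ordering $\mu_1\ge\cdots\ge\mu_K$ in a way that corrupts the greedy partition $\{(i-1)M+1,\dots,iM\}$, and a \emph{collision loss}, incurred on rounds where every player's partition is correct but the randomized assignment inside some $\boldsymbol{s}_{i,t}$ has not yet settled into a collision-free matching. I will argue that both pieces are $O(\log T)$, which, since D-MP-OBP converges to a collision-free greedy policy, bounds the gap to that policy (and hence to any fixed greedy target).

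For the learning piece I would recycle the machinery developed for Theorem~\ref{theorem:multi}. A ``wrong partition'' event at block position $k$ requires that some pair $i<j$ from different greedy blocks be swapped in some player's UCB ranking, which in turn requires a UCB deviation of the usual Auer--Cesa-Bianchi--Fischer type; each such pair contributes $O(\log T / \Delta_{i,j}^2)$ bad rounds in expectation. The per-round reward penalty triggered by such a swap at position $k$ is captured, exactly as in \eqref{eq:R_multi}, by a weight of the form $(1-k\tau)\prod_{\ell<k}(1-\mu_\ell)$ times a $\Delta$-gap, so summing over pairs and positions yields a logarithmic bound of the same order as the centralized bound in Theorem~\ref{theorem:multi}.

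For the collision piece, condition on every player having the correct partition. Then within each observation step $i$ the joint state of the players evolves as a Markov chain on assignments of the $M$ players to the $M$ arms in $\boldsymbol{s}_{i,t}$: colliding players re-randomize uniformly next round, non-colliding players keep their arm, and a full uniform reshuffle produces a collision-free matching with probability at least $M!/M^M>0$. This gives $O(1)$ expected rounds to absorption after each entry into the correct-partition regime, and since the number of such entries is bounded by one plus the number of bad-partition rounds, the total collision loss is also $O(\log T)$.

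The main obstacle is the learning piece, because in the distributed setting a player only updates $\overline{\mu}_k(t)$ when she actually selects arm $k$, and whether she does depends on her own (possibly incorrect) ranking, on the Bernoulli outcomes in earlier observation steps, and on the uniform tie-breaking used to avoid collisions. To apply UCB concentration I need a uniform lower bound on the per-round probability that each (player, arm) pair is sampled; I plan to obtain this by chaining the uniform randomization within a step (probability $\ge 1/M$) with the event of failing in every earlier step (probability $\ge (1-\mu_{\max})^{k-1}>0$), which forces the sample counts $n_k(t)$ to grow linearly in $t$ in expectation and restores the standard Chernoff--Hoeffding tail bounds needed to drive the Theorem~\ref{theorem:multi}-style argument through.
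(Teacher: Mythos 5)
Your proposal follows essentially the same route as the paper's proof: the same decomposition of $\mathbb{E}[\text{Loss}(T)]$ into a wrong-step learning loss (bounded by recycling the $\mathbb{E}[T^s_{i,k}]$ machinery from Theorem~\ref{theorem:multi}) plus a collision loss, with collisions bounded by an absorbing-Markov-chain argument whose restarts are charged to the bad (wrong-partition) rounds, of which there are only $O(\log T)$. The differences are minor---you bound the absorption time via the reshuffle success probability $M!/M^M$ where the paper counts the $\binom{2M-1}{M}$ states of the chain---and your closing concern about per-player sample counts is a legitimate subtlety that the paper itself glosses over by directly reusing the centralized concentration bounds.
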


We finally define the \textbf{D-MP-Adapt-OBP} algorithm, which adapts Algorithm \ref{alg:d-mp-obp} to steer the players towards a specific policy by adding a small extra term for each player. We define a function $f(\cdot)$ for each player to map the arm chosen in the first observation step to the arm chosen in the following steps given the predictions of each $\mu_k$. With some abuse of notation, we define $o^l_{m,t}$ as the arm chosen by player $m$ for step $l$ in round $t$. The function $f$ then steers the players to the collision-free greedy policy given by $o^{l+1}_{m, t} = f(o^l_{m,t}, \{\hat{\mu_k(t)}\}_{k=1}^K), \forall l=1, ..., L-1$ for each player $m$; we define the regret with respect to this policy. 

We can view the function $f$ as replacing the player index in the centralized setting with the relative ranking of the arm chosen by this player in prior observation steps. As an example, the greedy-sorted policy used in Section \ref{sec:centralized} is equivalent to: (1) letting players choose different arms, and (2) the player that chooses the arm in position $m$ continuing to choose the arm with the $m^{\text th}$ best reward of its set $\boldsymbol{s}_{i,t}$ in each subsequent step. 
Thus, we can steer the players to specific observation lists within a given collision-free greedy policy. Their decisions then converge to the specified policy. 
\begin{theorem}\label{theorem:distributed_adapt}
The expected regret, $\mathbb{E}[R(T)]$ of our distributed algorithm D-MP-Adapt-OBP is logarithmic in $T$.
\end{theorem}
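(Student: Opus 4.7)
My plan is to decompose the regret as $R(T) = R_{\text{UCB}}(T) + R_{\text{coord}}(T)$ and bound each term in $O(\log T)$. Here $R_{\text{UCB}}(T)$ collects the loss from rounds in which the UCB-sorted arm ranking deviates from the true ranking of the means (so that some player's partition $\boldsymbol{s}_{i,t}$ differs from the ``correct'' greedy partition $\{1,\dots,M\}, \{M+1,\dots,2M\},\dots$), while $R_{\text{coord}}(T)$ collects the loss from rounds in which every player already has the correct $\boldsymbol{s}_{i,t}$ but has not yet aligned with the $f$-induced target permutation---either because of a step-$i$ collision or because the saved arm $m^*_i$ dropped out of the current $\boldsymbol{s}_{i,t}$.

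For $R_{\text{UCB}}(T)$, I would reuse the concentration argument from the proof of Theorem~\ref{theorem:multi} essentially verbatim. A Chernoff--Hoeffding bound shows that for each pair $(i,j)$ with $\mu_i>\mu_j$, the expected number of rounds with $\hat{\mu}_j(t)\ge\hat{\mu}_i(t)$ is at most $8\log T/\Delta_{i,j}^2$ plus a constant. Because each player's observation list at such a round can differ from the target in at most $L$ positions, and each positional mismatch costs a bounded reward gap scaled by the $W_k$-type prefix factor, summing over pairs $(i,j)$, players, and steps produces an $O(K^2 L^2 \log T)$ bound, matching the C-MP-OBP regret.

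For $R_{\text{coord}}(T)$, I condition on the event that the UCB ranking at round $t$ is correct, so every player sees the same sets $\boldsymbol{s}_{i,t}$. In this regime D-MP-Adapt-OBP reduces to D-MP-OBP: the step-$1$ choice is a random matching on $M$ arms that is resampled after each collision or out-of-set event, and once the step-$1$ assignment is collision-free the map $f$ deterministically extends it to the $f$-induced target policy, incurring no further loss. The expected hitting time to a collision-free matching from any starting configuration depends only on $M$, so Theorem~\ref{theorem:distributed}'s coordination analysis yields an $O(1)$-per-reset bound. Resets are triggered either by a collision---whose expected count is logarithmic by Theorem~\ref{theorem:distributed}---or by a UCB-induced reshuffle of $\boldsymbol{s}_{1,t}$, which occurs only during the $O(\log T)$ rounds already counted in $R_{\text{UCB}}(T)$. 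Hence $R_{\text{coord}}(T) = O(\log T)$, completing the bound.

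The main obstacle will be cleanly handling the interplay between UCB drift and coordination: a late change in $\boldsymbol{s}_{1,t}$ can invalidate a previously collision-free assignment and force a fresh round of random matching, and one must argue that these restarts do not compound super-logarithmically. This is precisely where the argument piggybacks on Theorem~\ref{theorem:distributed}, which already controls the expected waiting time to a collision-free matching once the arm sets stabilize; extending it from ``any collision-free greedy policy'' to ``the unique $f$-induced target'' only affects the constant, because once step $1$ is collision-free, the function $f$ pins down each subsequent step deterministically and no additional coordination cost arises.
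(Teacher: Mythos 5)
Your proposal is correct and follows essentially the same route as the paper: the paper likewise splits the regret into collision losses plus the wrong-step and wrong-position learning terms already bounded for C-MP-OBP, bounds the step-1 matching via the same absorbing-Markov-chain argument, and uses exactly your key observation that $f$ deterministically extends a collision-free step-1 assignment so the coordination cost does not grow with the number of observation steps. Your framing of ``restarts are triggered only by UCB-ranking errors, of which there are $O(\log T)$'' matches the paper's multiplication of the per-good-phase collision bound by $\sum_k(\sum_i \mathbb{E}[T^s_{i,k}]+\sum_i\mathbb{E}[T^a_{i,k}])$.
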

We observe from the proof of Theorem \ref{theorem:distributed_adapt} that the regret is combinatorial in $M$ but logarithmic in $T$, unlike the centralized multi-player setting's $O(K^2 L^2\log T)$ regret in Theorem~\ref{theorem:multi}. This scaling with $M$ comes from the lack of coordination between players and the resulting collisions.

\section{Experiments}\label{sec:exp}
We validate the theoretical results from Sections~\ref{sec:single}--\ref{sec:distributed} with numerical simulations. We summarize our results as follows:


{\bf Sublinear regret:} We show in Figure~\ref{fig:regret} that our algorithms in the single-player, multi-player centralized, and multi-player distributed settings all achieve a sublinear regret, respectively defined relative to the single-player offline optimal (Lemma~\ref{lemma:descend_order}), the greedy-sorted policy, and a collision-free-greedy-random policy that in each step greedily chooses the set of arms but randomly picks one collision-free allocation. 
%
Figure~\ref{fig:multi_regret} shows our C-MP-OBP algorithm's regret is even negative for a few runs: by deviating from the greedy-sorted policy towards the true optimum, the C-MP-OBP algorithm may obtain a higher reward. 
The regret of D-MP-OBP in Figure~\ref{fig:dis_regret} is larger than that of C-MP-OBP, likely due to collisions in the distributed setting.
\captionsetup[figure]{labelfont=bf}
\begin{figure}
    \begin{subfigure}[b]{0.3\linewidth}
    \centering
        \includegraphics[width=\textwidth]{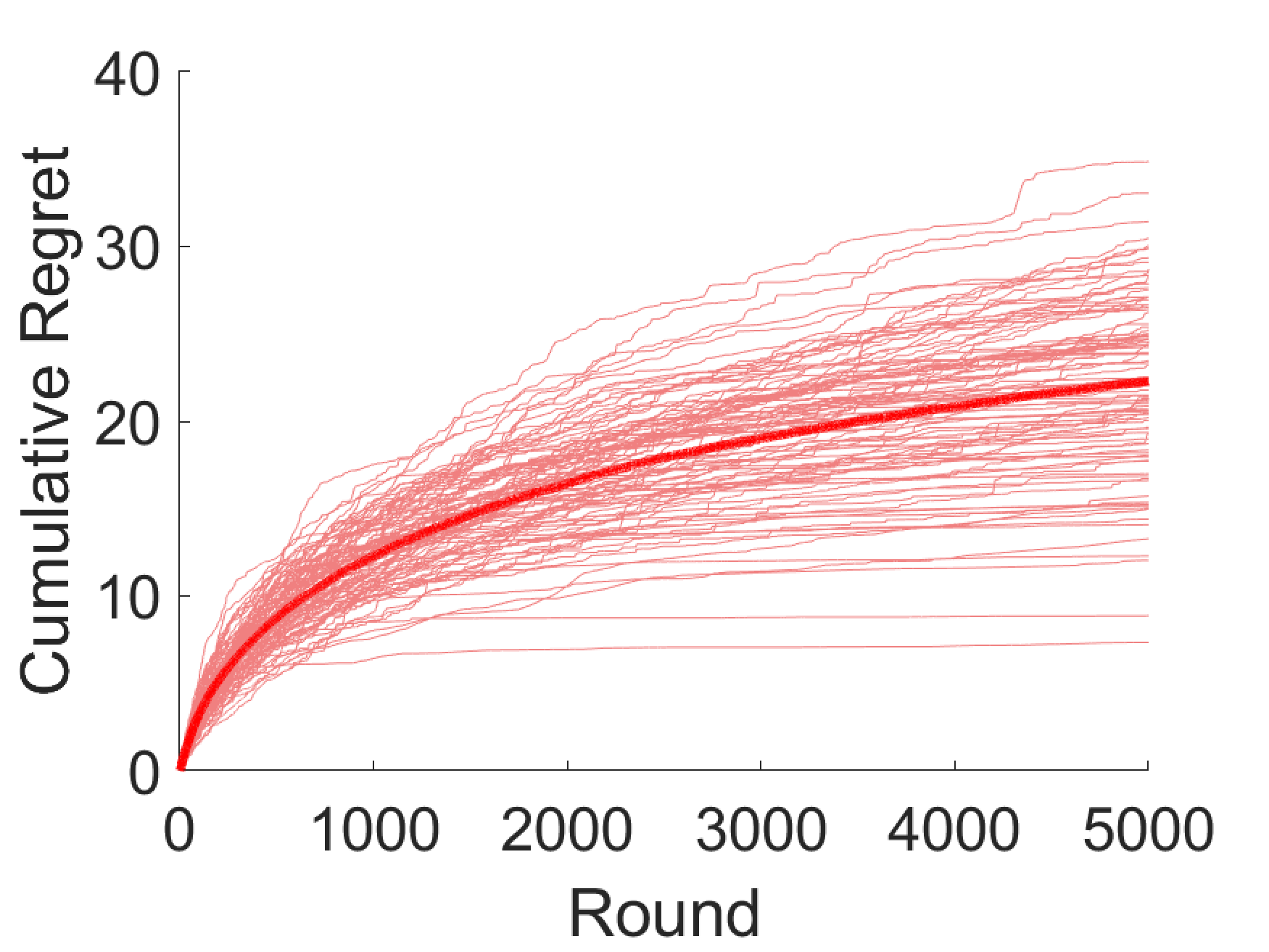}
        \caption{OBP-UCB.}
        \label{fig:single_regret}
    \end{subfigure}
    \hspace{2mm}
    \begin{subfigure}[b]{0.3\linewidth}
    \centering
        \includegraphics[width=\textwidth]{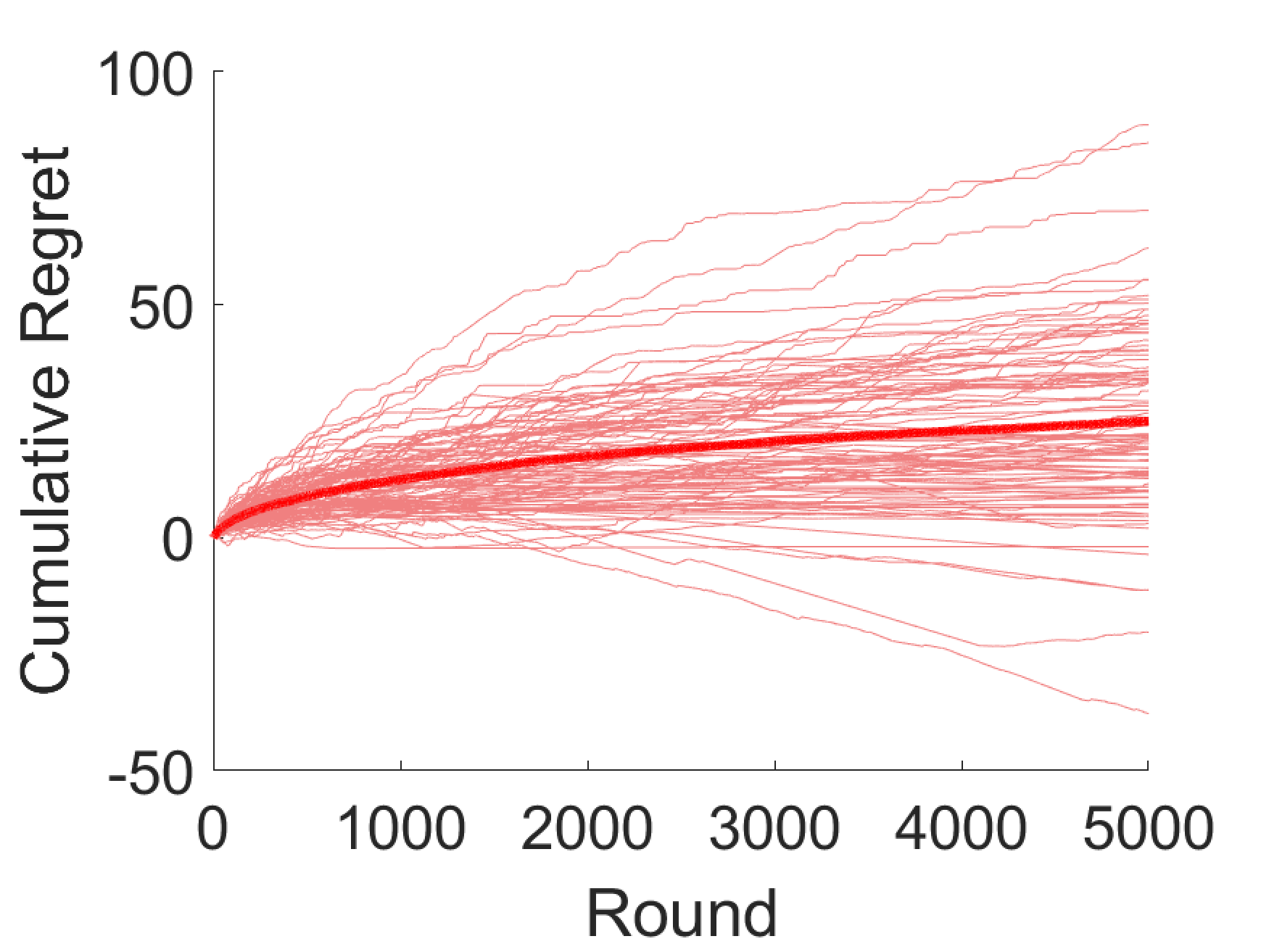}
        \caption{C-MP-OBP.}
        \label{fig:multi_regret}
    \end{subfigure}
    \hspace{2mm}
    \begin{subfigure}[b]{0.3\linewidth}
    \centering
        \includegraphics[width=\textwidth]{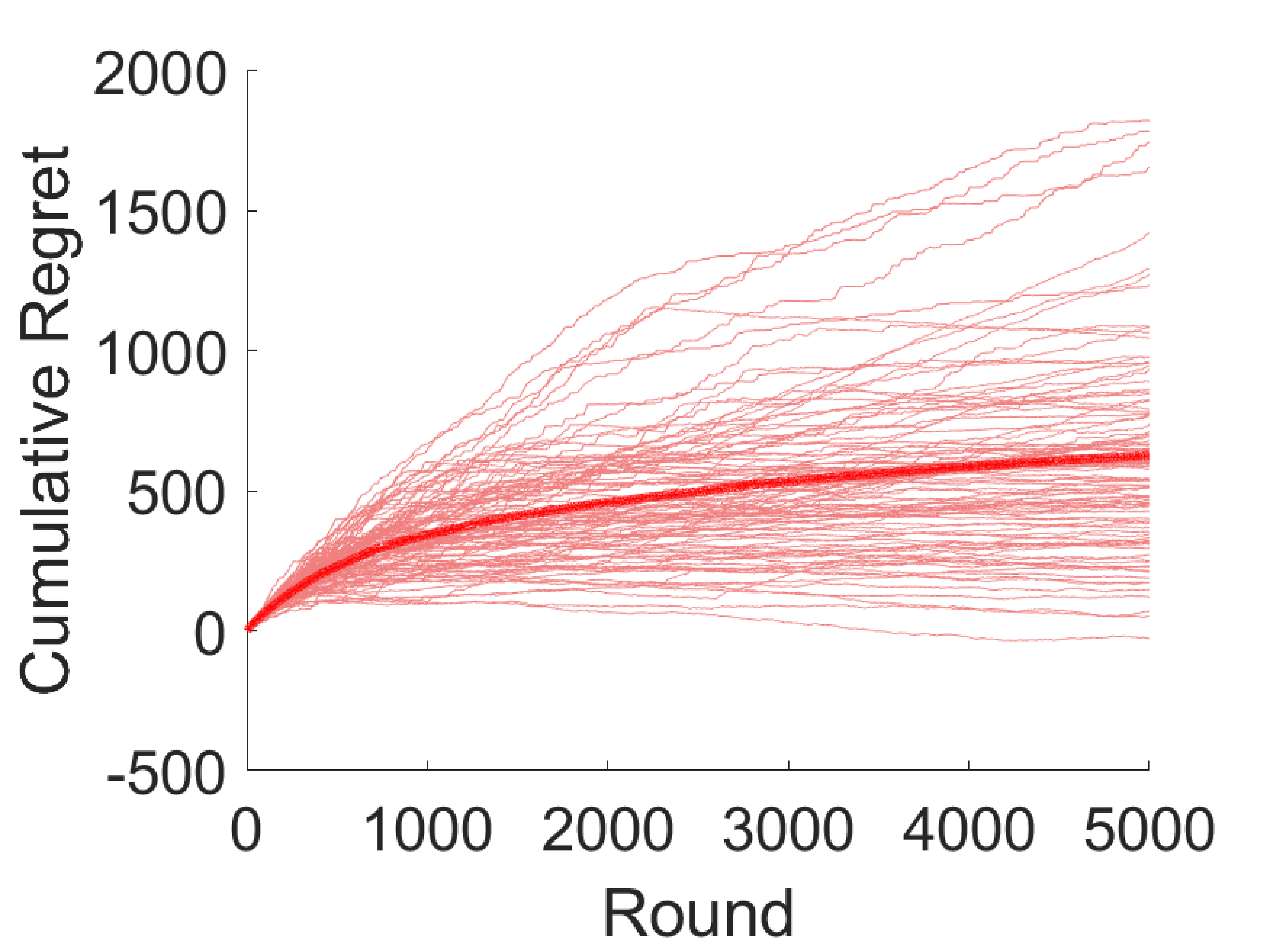}
        \caption{D-MP-OBP.}
        \label{fig:dis_regret}
    \end{subfigure}
\caption{Sublinear regret in each setting. Each line represents an experiment run with randomly chosen reward distributions; the bold line is the average over 100 runs.}
\label{fig:regret}
\end{figure}

\begin{table}[t]
\centering
\begin{tabular}{lllll}
\hline
$\tau$ & single-opt & random & single-real & random-real \\
\hline
0.01   & 102\%              & 5\%    & 76\%           & 6\%               \\
0.05   & 92\%               & 34\%   & 71\%           & 47\%              \\
0.1    & 78\%               & 140\%  & 63\%           & 245\%             \\
\hline
\end{tabular}
\caption{Average \% reward improvements of OBP-UCB}
\label{tab:1}
\end{table}
\begin{table}[t]
\centering
\begin{tabular}{lllll}
\hline
$\tau$ & single-opt & random & single-real & random-real \\
\hline
0.1                         & 41\%, 27\%               & 7\%, 39\%    & 35\%, 198\%           & 4\%, 30\%               \\
0.2                         & 33\%, 20\%               & 15\%, 47\%   & 28\%, 183\%           & 10\%, 36\%              \\
0.3                         & 22\%, 11\%               & 30\%, 60\%   & 19\%, 165\%           & 20\%, 47\%              \\
\hline
\end{tabular}
\caption{Average C-MP-OBP, D-MP-OBP \% improvement.}
\label{tab:2}
\end{table}
{\bf Superiority to baseline strategies:} We show in Tables \ref{tab:1} and \ref{tab:2} that our algorithms consistently outperform two baselines, in both synthetic reward data ($K = 9$ arms with expected rewards uniformly drawn from $[0,0.5]$ and $M = 3$ players for multi-player settings) and real channel availability traces~\cite{real_trace}. Our first baseline is a \textbf{random heuristic} (called \textbf{random} for synthetic data and \textbf{random-real} for real data trace) in which users pre-observe arms uniformly at random and play the first available arm. Comparisons to this baseline demonstrate the value of strategically choosing the order of the pre-observations. Our second baseline is an \textbf{optimal offline single-observation policy} (\textbf{single-opt}), which allocates the arms with the $M$ highest rewards to each player (in the single-player setting, $M = 1$). 
These optimal offline policies are superior to any learning-based policy with a single observation, so comparisons with this baseline demonstrate the value of pre-observations. When the rewards are drawn from a real data trace, they may no longer be i.i.d. Bernoulli distributed, so these offline policies are no longer truly ``optimal.'' Instead, we take a \textbf{single-observation UCB algorithm} (\textbf{single-real}) as the baseline; this algorithm allocates the arms with the top $M$ ($\geq 1$) highest UCB values to different users, and each player still observes and plays one such arm in each round.

Tables \ref{tab:1} and \ref{tab:2} show the average  improvements in the cumulative reward achieved by our algorithms over the baselines after 5000 rounds over 100 experiment repetitions with different $\tau$. In each setting, increasing $\tau$ causes the improvement over the random baseline to increase: when $\tau$ is small, there is little cost to mis-ordered observations, so the random algorithm performs relatively well. Conversely, increasing $\tau$ narrows the reward gap with the single-observation baseline: as pre-observations become more expensive, allowing users to make them does not increase the reward as much.

{\bf Effect of $\mu$:} We would intuitively expect that increasing the average rewards $\mu_i$ would increase the reward gap with the random baseline: it is then more important to pre-observe ``good'' arms first, to avoid the extra costs from pre-observing occupied arms. We confirm this intuition in each of our three settings. However, increasing the $\mu$'s does not always increase the reward gap with the single-observation baseline, since if the $\mu$'s are very low or very high, pre-observations are less valuable. When the $\mu$'s are small, the player would need to pre-observe several arms to find an available one, decreasing the final reward due to the cost of these pre-observations. When the $\mu$'s are large, simply choosing the best arm is likely to yield a high reward, and the pre-observations would add little value. Figures~\ref{fig:single_mu_offline} and \ref{fig:single_mu_random} plot the reward gap with respect to $x$ ( $\mu$'s are drawn from $U(0, x)$) : an increase in $x$ increases the reward gap with the random baseline, but has a non-monotonic effect compared to the single-observation baseline. Similar trends in multi-player settings are shown in the appendix.

\captionsetup[figure]{labelfont=bf}
\begin{figure}
    \begin{subfigure}[b]{0.48\linewidth}
    \centering
        \includegraphics[width=\textwidth]{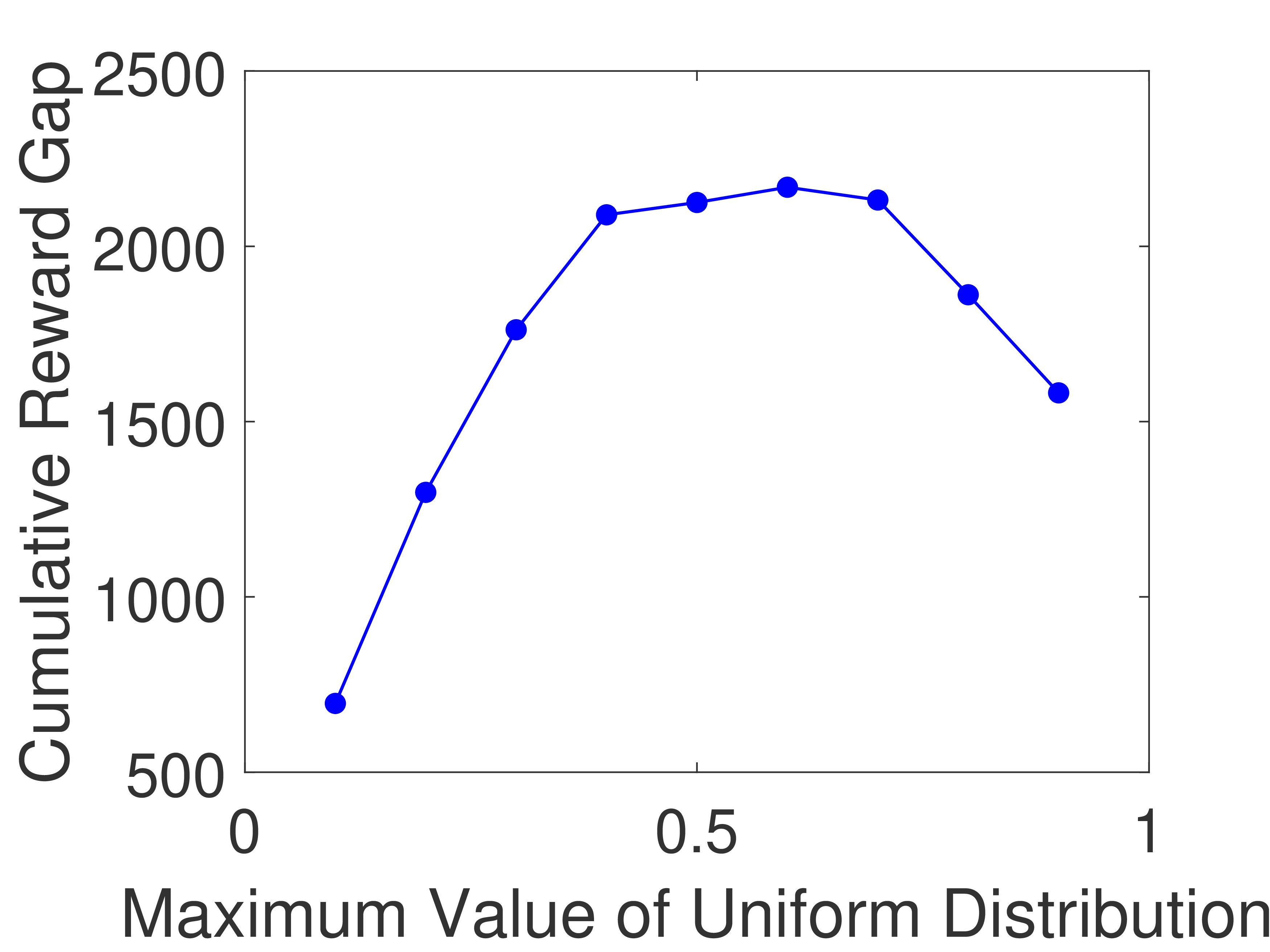}
        \caption{Single-observation baseline.}
        \label{fig:single_mu_offline}
    \end{subfigure}
    \hspace{2mm}
    \begin{subfigure}[b]{0.48\linewidth}
    \centering
        \includegraphics[width=\textwidth]{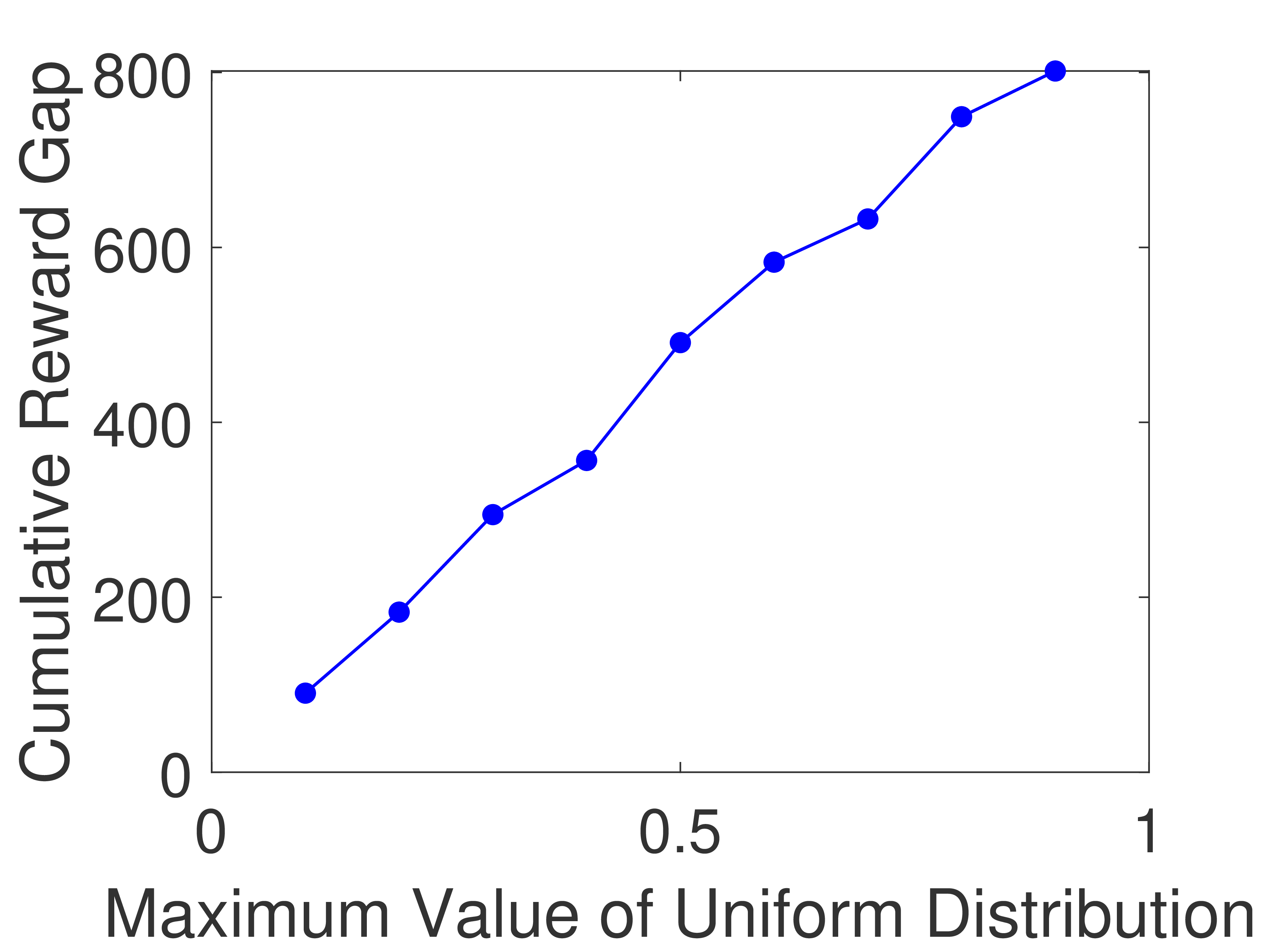}
        \caption{Random baseline.}
        \label{fig:single_mu_random}
    \end{subfigure}
\caption{Average cumulative reward gaps in the single-player (OBP-UCB) setting after 5000 rounds over 100 experiments, when $\tau = 0.1$ and $K = 9$ arms with expected rewards $\mu$'s uniformly drawn from the range $[0, x]$.}
\label{fig:mu}
\end{figure}
\section{Discussion and Conclusion}\label{sec:discuss}
In this work, we introduce {\bf pre-observations} into multi-armed bandit problems. 
Such pre-observations introduce new technical challenges to the MAB framework, as players must not only learn the best set of arms, but also the optimal order in which to pre-observe these arms. This challenge is particularly difficult in multi-player settings, as each player must learn an observation set of arms that avoids collisions with other players. We develop algorithms for both the single- and multi-player settings and show that they achieve logarithmic regret over multiple rounds. As one of the first works to consider pre-observations, however, we leave several problems open for future work. One might, for instance, consider user arrivals and departures, which would affect the offline optimal observation lists; or temporal reward correlations. Both of these would likely arise in our motivating scenario of cognitive radio networks, as devices move in and out of range and channel incumbents exhibit temporal behavior patterns. Another challenging extension would be to consider cases with more limited collisions, where one arm might serve multiple users (e.g., if an ``arm'' is a city block when users are searching for parking spaces). In such cases, we must learn not just the probability that the arm is available (i.e., its expected reward) but also the full distribution of the number of users that the arm can accommodate.

\bibliographystyle{unsrt}
\bibliography{main}

\begin{thebibliography}{10}

\bibitem{bubeck2012regret}
S{\'e}bastien Bubeck and Nicolo Cesa-Bianchi.
\newblock Regret analysis of stochastic and nonstochastic multi-armed bandit
  problems.
\newblock {\em Foundations and Trends{\textregistered} in Machine Learning},
  5(1):1--122, 2012.

\bibitem{Yun:MAB}
Donggyu Yun, Alexandre Proutiere, Sumyeong Ahn, Jinwoo Shin, and Yung Yi.
\newblock Multi-armed bandit with additional observations.
\newblock {\em Proceedings of the ACM on Measurement and Analysis of Computing
  Systems}, 2(1):13, 2018.

\bibitem{kveton2015cascading}
Branislav Kveton, Csaba Szepesvari, Zheng Wen, and Azin Ashkan.
\newblock Cascading bandits: Learning to rank in the cascade model.
\newblock In {\em International Conference on Machine Learning}, pages
  767--776, 2015.

\bibitem{rosenski2016multi}
Jonathan Rosenski, Ohad Shamir, and Liran Szlak.
\newblock Multi-player bandits--a musical chairs approach.
\newblock In {\em International Conference on Machine Learning}, pages
  155--163, 2016.

\bibitem{besson2018multi}
Lilian Besson and Emilie Kaufmann.
\newblock Multi-player bandits revisited.
\newblock In {\em Algorithmic Learning Theory}, pages 56--92, 2018.

\bibitem{kumar2018trekking}
Rohit Kumar, A~Yadav, Sumit~Jagdish Darak, and Manjesh~K Hanawal.
\newblock Trekking based distributed algorithm for opportunistic spectrum
  access in infrastructure-less network.
\newblock In {\em International Symposium on Modeling and Optimization in
  Mobile, Ad Hoc, and Wireless Networks (WiOpt)}, pages 1--8. IEEE, 2018.

\bibitem{li2014almost}
Bowen Li, Panlong Yang, Jinlong Wang, Qihui Wu, Shaojie Tang, Xiang-Yang Li,
  and Yunhao Liu.
\newblock Almost optimal dynamically-ordered channel sensing and accessing for
  cognitive networks.
\newblock {\em IEEE Transactions on Mobile Computing}, 13(10):2215--2228, 2014.

\bibitem{Lai1985}
T.L Lai and Herbert Robbins.
\newblock Asymptotically efficient adaptive allocation rules.
\newblock {\em Adv. Appl. Math.}, 6(1):4--22, 1985.

\bibitem{UCB1_survey}
Peter Auer, Nicol\`{o} Cesa-Bianchi, and Paul Fischer.
\newblock Finite-time analysis of the multiarmed bandit problem.
\newblock {\em Mach. Learn.}, 47(2-3):235--256, 2002.

\bibitem{ahmad2009optimality}
Sahand Haji~Ali Ahmad, Mingyan Liu, Tara Javidi, Qing Zhao, and Bhaskar
  Krishnamachari.
\newblock Optimality of myopic sensing in multichannel opportunistic access.
\newblock {\em IEEE Transactions on Information Theory}, 55(9):4040--4050,
  2009.

\bibitem{lai2011cognitive}
Lifeng Lai, Hesham El~Gamal, Hai Jiang, and H~Vincent Poor.
\newblock Cognitive medium access: Exploration, exploitation, and competition.
\newblock {\em IEEE transactions on mobile computing}, 10(2):239--253, 2011.

\bibitem{liu2010distributed}
Keqin Liu and Qing Zhao.
\newblock Distributed learning in multi-armed bandit with multiple players.
\newblock {\em IEEE Transactions on Signal Processing}, 58(11):5667--5681,
  2010.

\bibitem{distributed_jsac}
Animashree Anandkumar, Nithin Michael, Ao~Kevin Tang, and Ananthram Swami.
\newblock Distributed algorithms for learning and cognitive medium access with
  logarithmic regret.
\newblock {\em IEEE Journal on Selected Areas in Communications}, 29(4):731 --
  745, 2011.

\bibitem{avner2016multi}
Orly Avner and Shie Mannor.
\newblock Multi-user lax communications: a multi-armed bandit approach.
\newblock In {\em IEEE INFOCOM 2016-The 35th Annual IEEE International
  Conference on Computer Communications}, pages 1--9. IEEE, 2016.

\bibitem{bonnefoi2017multi}
R{\'e}mi Bonnefoi, Lilian Besson, Christophe Moy, Emilie Kaufmann, and Jacques
  Palicot.
\newblock Multi-armed bandit learning in iot networks: Learning helps even in
  non-stationary settings.
\newblock In {\em International Conference on Cognitive Radio Oriented Wireless
  Networks}, pages 173--185. Springer, 2017.

\bibitem{avner2014concurrent}
Orly Avner and Shie Mannor.
\newblock Concurrent bandits and cognitive radio networks.
\newblock In {\em Joint European Conference on Machine Learning and Knowledge
  Discovery in Databases}, pages 66--81. Springer, 2014.

\bibitem{combes2015learning}
Richard Combes, Stefan Magureanu, Alexandre Proutiere, and Cyrille Laroche.
\newblock Learning to rank: Regret lower bounds and efficient algorithms.
\newblock {\em ACM SIGMETRICS Performance Evaluation Review}, 43(1):231--244,
  2015.

\bibitem{kveton2015combinatorial}
Branislav Kveton, Zheng Wen, Azin Ashkan, and Csaba Szepesvari.
\newblock Combinatorial cascading bandits.
\newblock In {\em Advances in Neural Information Processing Systems}, pages
  1450--1458, 2015.

\bibitem{zong2016cascading}
Shi Zong, Hao Ni, Kenny Sung, Nan~Rosemary Ke, Zheng Wen, and Branislav Kveton.
\newblock Cascading bandits for large-scale recommendation problems.
\newblock In {\em Proceedings of the Thirty-Second Conference on Uncertainty in
  Artificial Intelligence}, pages 835--844. AUAI Press, 2016.

\bibitem{zhou2018cost}
Ruida Zhou, Chao Gan, Jing Yan, and Cong Shen.
\newblock Cost-aware cascading bandits.
\newblock In {\em International Joint Conference on Artificial Intelligence},
  2018.

\bibitem{li2016contextual}
Shuai Li, Baoxiang Wang, Shengyu Zhang, and Wei Chen.
\newblock Contextual combinatorial cascading bandits.
\newblock In {\em ICML}, volume~16, pages 1245--1253, 2016.

\bibitem{ahuja2007exact}
Ravindra~K Ahuja, Arvind Kumar, Krishna~C Jha, and James~B Orlin.
\newblock Exact and heuristic algorithms for the weapon-target assignment
  problem.
\newblock {\em Operations research}, 55(6):1136--1146, 2007.

\bibitem{proof_wta}
Marzio~De Biasi.
\newblock Weapon-target assignment problem.
\newblock
  \url{http://www.nearly42.org/cstheory/weapon-target-assignment-problem/},
  2013.

\bibitem{real_trace}
Shangxing Wang.
\newblock \url{https://github.com/ANRGUSC/MultichannelDQN-channelModel}, 2018.

\bibitem{wiki:absorbing}
\url{https://en.wikipedia.org/wiki/Absorbing_Markov_chain}, 2018.

\bibitem{book:combinatorics}
Miklos Bona.
\newblock {\em A Walk Through Combinatorics}.
\newblock World Scientific, 2011.

\end{thebibliography}

\newpage
\appendix
{\bf \Large Appendix}
\section{Proof of Lemma \ref{lemma:descend_order} and \ref{lemma:descend-multi}} \label{app:descend_order}
\begin{proof}
Assume there exists an observation list $\boldsymbol{o}_\text{old}$ such that ${o}_\text{old}^{(i)} = b, {o}_\text{old}^{(j)} = a$, and $i < j, \mu_a > \mu_b$. In other words, the $i^{\text{th}}$ arm to be observed in $\boldsymbol{o}_\text{old}$ has less expected reward than the $j^{\text{th}}$ arm. Now let us consider a new observation list $\boldsymbol{o}_\text{new}$, which switches arms $a$ and $b$ in $\boldsymbol{o}_\text{old}$ and leaves the other arms unchanged. Define the one-round expected reward of $\boldsymbol{o}_\text{old}$ and $\boldsymbol{o}_\text{new}$ as $r_\text{old}$ and $r_\text{new}$. From \eqref{eq:r(T)_single}, we can find that the gap between $r_\text{old}$ and $r_\text{new}$ is only caused by the $i^{\text{th}}$ to the $j^{\text{th}}$ arm in the observation list, so we get:
\begin{equation}
\begin{aligned}
    r_\text{new} - r_\text{old} &= \sum_{k=i}^{j}\Bigg\{(1 - k\,\tau)\mu_{o^{(k)}_{\text{new}}}\prod_{x=1}^{k-1}(1-\mu_{o^{(x)}_{\text{new}}})-(1 - k\,\tau)\mu_{o^{(k)}_{\text{old}}}\prod_{x=1}^{k-1}(1-\mu_{o^{(x)}_{\text{old}}}) \Bigg\}\\
    & = \prod_{x=1}^{i-1}(1-\mu_{o^{(x)}_{\text{new}}}) \Bigg \{(1 - i\,\tau)(\mu_a - \mu_b)-\sum_{k=i+1}^{j-1} \Big \{ (1 - k\,\tau)(\mu_a - \mu_b) \mu_{o^{(k)}_{\text{new}}}\prod_{x=i+1}^{k-1}(1-\mu_{o^{(x)}_{\text{new}}})\Big \} -  \\
    & (1 - j\,\tau)(\mu_a - \mu_b) \prod_{x=i+1}^{j-1}(1-\mu_{o^{(x)}_{\text{new}}})\Bigg \}\\
    & > \prod_{x=1}^{i-1}(1-\mu_{o^{(x)}_{\text{new}}}) \Bigg \{(1 - i\,\tau)(\mu_a - \mu_b) -  (1 -(i+1)\,\tau) (\mu_a - \mu_b) \sum_{k=i+1}^{j}  \mu_{o^{(k)}_{\text{new}}}\prod_{x=i+1}^{k-1}(1-\mu_{o^{(x)}_{\text{new}}})-\\
    &(1 - (i+1)\,\tau)(\mu_a - \mu_b) \prod_{x=i+1}^{j-1}(1-\mu_{o^{(x)}_{\text{new}}})\Bigg \} \\
    & \geq \prod_{x=1}^{i-1}(1-\mu_{o^{(x)}_{\text{new}}}) \Bigg \{(1 - i\,\tau)(\mu_a - \mu_b) - (1 -(i+1)\,\tau) (\mu_a - \mu_b) \Bigg \} \\
    & \geq \prod_{x=1}^{i-1}(1-\mu_{o^{(x)}_{\text{new}}}) \Bigg \{\tau\,(\mu_a - \mu_b) \Bigg \}\\
    & \geq 0.
\end{aligned}
\end{equation}
Thus, the expected reward of $\boldsymbol{o}_\text{new}$ is always larger than that of $\boldsymbol{o}_\text{old}$. As a result, exchanging arms $a$ and $b$ in $\boldsymbol{o}_\text{old}$ always improves the expected reward. We can then conclude that
the optimal policy for the single-player setting is $\boldsymbol{o}_t^{*}=(1, 2, \dots, K)$, which is Lemma \ref{lemma:descend_order}. For the centralized multi-player setting, similarly, the optimal ordering is where no arm has lower expected reward than any arm observed after it, which concludes the proof of Lemma \ref{lemma:descend-multi}. 
\end{proof}

\section{Proof of Theorem~\ref{theorem:single}} \label{app:single}
\begin{proof}
To prove Theorem \ref{theorem:single}, let us firstly rewrite \eqref{eq:R_single_1} as:
\begin{equation}\label{eq:R_single}
\begin{aligned}
    R(T) & = \sum_{t=1}^{T}\sum_{k=1}^{K}\Bigg\{(1 - k\,\tau)\mu_{k}\prod_{i=1}^{k-1}(1-\mu_{i}) - (1 - k\,\tau)\mu_{o^{(k)}_{t}}\prod_{i=1}^{k-1}(1-\mu_{o^{(i)}_{t}})\Bigg\}\\
      & \leq \sum_{t=1}^{T}\sum_{k=1}^{K}\Bigg\{(1 - k\,\tau)(\mu_{k}-\mu_{o^{(k)}_{t}})\prod_{i=1}^{k-1}(1-\mu_{i})\Bigg\}.
\end{aligned}
\end{equation}
The last inequality holds since $\prod_{i=1}^{k-1}(1-\mu_{i})$ is always not greater than $\prod_{i=1}^{k-1}(1-\mu_{o^{(i)}_{t}})$ for any $\boldsymbol{o}_t$ when $\mu_1 \geq \mu_2 \geq \dots \geq \mu_K$. Now let us focus on this inequality. At round $t$, if $o^{(k)}_{t} > k$ (i.e., the $k$th pre-observed arm has better average reward than arm $k$), then $\mu_{k}-\mu_{o^{(k)}_{t}} \geq 0$ and the regret for $o^{(k)}_{t}$ is nonnegative; if $o^{(k)}_{t} < k$, then $\mu_{k}-\mu_{o^{(k)}_{t}} \leq 0$, and the regret for $o^{(k)}_{t}$ is non-positive. In order to upper bound $R(T)$, we can ignore the negative terms and only count the positive regrets for all $o^{(k)}_{t} > k$. These positive regrets come from observing arms with lower expected rewards before those with higher expected rewards. Letting $W_k := (1 - k\,\tau)\prod_{i=1}^{k-1}(1-\mu_{i})$ and $\Delta_{i,j} := \mu_i - \mu_j$, the total regret can be bounded as:
\begin{equation}\label{eq:R(T)}
\begin{aligned}
    R(T) \leq & \sum_{t=1}^{T}\sum_{k=1}^{K}\Bigg\{W_k\,\Delta_{k,o^{(k)}_{t}}\mathbbm{1}\{o^{(k)}_{t} > k\}\Bigg\}.
\end{aligned}
\end{equation}
Define $T_{i,j}$ as the number of times that the $i^{\text{th}}$ arm to be observed in $\boldsymbol{o}_t$ is arm $j$, i.e., $T_{i,j} := \sum_{t=1}^{T}\mathbbm{1}\{o^{(i)}_{t} = j\}$. We then rewrite \eqref{eq:R(T)}:
\begin{equation}\label{eq:R(T)_single_3}
\begin{aligned}
    R(T) &\leq \sum_{t=1}^{T}\sum_{i=1}^{K-1}\sum_{j=i+1}^{K}\Bigg\{W_i\,\Delta_{i,j}\mathbbm{1}\{o^{(i)}_{t} = j\}\Bigg\}\\
    &= \sum_{i=1}^{K-1}\sum_{j=i+1}^{K}\Bigg\{W_i\,\Delta_{i,j}\sum_{t=1}^{T}\mathbbm{1}\{o^{(i)}_{t} = j\}\Bigg\}\\
    &= \sum_{i=1}^{K-1}\sum_{j=i+1}^{K}\Bigg\{W_i\,\Delta_{i,j}\,T_{i,j}\Bigg\}.
\end{aligned}
\end{equation}
In order to bound $\mathbb{E}[R(T)]$, we need to bound $\mathbb{E}[T_{i,j}]$ for all $i<j$.
\begin{lemma}
\label{lemma:Tij}
$\forall\,  i, j \in [K]$ with $i < j$, under Algorithm 1, $\mathbb{E}[T_{i,j}] \leq i\,(\frac{8\log T}{\Delta_{i,j}^2} + 1 + \frac{\pi^2}{3})$.
\end{lemma}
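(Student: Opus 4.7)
The plan is to reduce $T_{i,j}$ to a sum of $i$ pairwise UCB comparisons via a union bound, then apply the classical UCB1 concentration argument to each.

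First, I would use the combinatorial observation that if $o^{(i)}_t = j$ with $i < j$, then at least one arm $k \in \{1,\ldots,i\}$ must satisfy $\hat{\mu}_k(t) \leq \hat{\mu}_j(t)$: otherwise arms $1,\ldots,i$ would all strictly outrank arm $j$ in the UCB ordering, forcing $j$ to position at least $i+1$. The union bound then gives
\[
T_{i,j} \leq \sum_{k=1}^{i} \sum_{t=1}^{T} \mathbbm{1}\{\hat{\mu}_k(t) \leq \hat{\mu}_j(t),\, o^{(i)}_t = j\}.
\]
Because $k \leq i$ implies $\mu_k \geq \mu_i$ and hence $\Delta_{k,j} \geq \Delta_{i,j}$, it suffices to bound each inner sum in expectation by $8\log T/\Delta_{i,j}^2 + 1 + \pi^2/3$.

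Next, fix $k$ and set $\ell := \lceil 8\log T / \Delta_{i,j}^2 \rceil$. I would follow the standard UCB1 decomposition: if $\hat{\mu}_k(t) \leq \hat{\mu}_j(t)$ holds at time $t$, then at least one of three events must occur --- (a) $\overline{\mu}_k(t) \leq \mu_k - \sqrt{2\log t/n_k(t)}$, (b) $\overline{\mu}_j(t) \geq \mu_j + \sqrt{2\log t/n_j(t)}$, or (c) $n_j(t) < 8\log t/\Delta_{k,j}^2$ --- since otherwise combining the two UCB inequalities yields $\mu_k \leq \mu_j + 2\sqrt{2\log t/n_j(t)} \leq \mu_j + \Delta_{k,j}$, contradicting $\mu_k - \mu_j = \Delta_{k,j}$. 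Hoeffding's inequality gives that each of (a) and (b) occurs with probability at most $1/t^4$ after union-bounding the possible sample counts $n_k(t), n_j(t) \in \{1,\ldots,t\}$, and summing over $t$ contributes at most $\pi^2/3$ in expectation.

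The main obstacle is case (c). In the classical UCB1 proof the counted indicator $I_t = j$ coincides with actually pulling arm $j$, so each occurrence increments $n_j$ and at most $\ell$ rounds can satisfy $n_j(t) < \ell$. In our OBP algorithm, $o^{(i)}_t = j$ only places arm $j$ at position $i$; arm $j$ is actually observed only when all arms at positions $1,\ldots,i-1$ realize reward $0$, so $n_j$ may fail to increment while the event keeps occurring. I plan to handle this by exploiting the fact that the arm at position $1$ is always observed in the OBP algorithm, and that while $n_j$ stays fixed the confidence bonus $\sqrt{2\log t/n_j(t)}$ grows without bound in $t$; thus case (c) rounds cannot accumulate indefinitely, since arm $j$'s UCB eventually overtakes every other arm, lands at position $1$, and forces $n_j$ to increment, capping the case-(c) contribution at $\ell$. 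Combining the three pieces yields $\mathbb{E}[T_{i,j}] \leq i\,(8\log T/\Delta_{i,j}^2 + 1 + \pi^2/3)$, matching the claim.
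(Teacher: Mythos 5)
Your first half is exactly the paper's argument: the same combinatorial observation that $o^{(i)}_t=j$ forces some $k\le i$ to satisfy $\hat{\mu}_k(t)\le\hat{\mu}_j(t)$, the same union bound over $k\in\{1,\dots,i\}$, and the same use of the monotonicity $\Delta_{k,j}\ge\Delta_{i,j}$. The divergence is in the second half: the paper simply asserts that each pairwise count $\sum_{t}\mathbbm{1}\{\hat{\mu}_k(t)\le\hat{\mu}_j(t)\}$ can be bounded ``using the same idea'' as in UCB1, whereas you try to actually execute that idea and, to your credit, identify precisely where it does not transfer --- the counted event is not tied to an observation of arm $j$, so rounds with $n_j(t)<\ell$ need not increment $n_j$.

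Your patch for case (c), however, does not close this gap. Noting that $\sqrt{2\log t/n_j(t)}$ diverges while $n_j$ is frozen shows only that arm $j$ is observed infinitely often (it eventually climbs to position $1$, where observation is guaranteed); it does not bound the number of qualifying rounds that elapse between two consecutive increments of $n_j$. That number is of order $1/\prod_{l<i}\bigl(1-\mu_{o^{(l)}_t}\bigr)$ in expectation: if the arm in position $1$ has $\mu=0.99$, an arm $j$ sitting in position $2$ is observed only about once per hundred rounds on which the event holds, so the case-(c) contribution is not capped at $\ell$ and the per-pair bound $\ell+1+\pi^2/3$ does not follow from your argument. (The paper's own proof glosses over the identical issue by citation.) The standard repair, as in cascading-bandit analyses, is to count \emph{observations} of arm $j$ in position $i$ rather than placements, and to exploit the fact that the per-round regret weight $W_i$ already carries the factor $\prod_{l<i}(1-\mu_l)\le\prod_{l<i}\bigl(1-\mu_{o^{(l)}_t}\bigr)$, i.e.\ the same reach probability that discounts the observation rate, so that the two factors cancel in the regret sum; your decomposition separates the count from the weight and therefore cannot use this cancellation, which is why the final step fails as written.
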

\begin{proof}
Algorithm 1 sorts the UCB values to determine the pre-observation list $\boldsymbol{o}_t$, so $T_{i,j}$ is equal to the number of times that $\hat{\mu}_j(t)$, the UCB value of arm $j$, is the $i^{\text{th}}$ largest one in $\hat{\boldsymbol{\mu}}(t)$. In that case, at least one arm in the set $\{1, 2, \dots, i\}$ has smaller UCB value than $\hat{\mu}_j(t)$, since at most $i-1$ arms have larger UCB values than $\hat{\mu}_j(t)$. Thus, $T_{i,j}$ can be bounded by the number of times that the minimum UCB value of arms $\{1, 2, \dots, i\}$ is less than $\hat{\mu}_j(t)$:
\begin{equation}
\begin{aligned}
    T_{i,j} &\leq \sum_{t=1}^{T} \mathbbm{1}\{\min_{k\in [i]} \hat{\mu}_k(t) \leq \hat{\mu}_j(t)\}\\
    &\leq \sum_{t=1}^{T} \sum_{k=1}^{i} \mathbbm{1}\{\hat{\mu}_k(t) \leq \hat{\mu}_j(t)\}\\
    &\leq \sum_{k=1}^{i}\sum_{t=1}^{T} \mathbbm{1}\{\hat{\mu}_k(t) \leq \hat{\mu}_j(t)\}.
\end{aligned}
\end{equation}
Since $i < j$ and $k \in [i]$, we can bound $\sum_{t=1}^{T} \mathbbm{1}\{\hat{\mu}_k(t) \leq \hat{\mu}_j(t)\}$
using the same idea to bound the number of times of choosing sub-optimal arms in traditional UCB1 algorithm \cite{UCB1_survey}. We can get:
\begin{equation}
\begin{aligned}
    \mathbb{E}[T_{i,j}] &\leq \sum_{k=1}^{i}\mathbb{E}\left[\sum_{t=1}^{T} \mathbbm{1}\{\hat{\mu}_k(t) \leq \hat{\mu}_j(t)\}\right]\\
    &\leq \sum_{k=1}^{i}\left\{ \frac{8\log T}{\Delta_{k,j}^2} + 1 + \frac{\pi^2}{3}\right\}\\
    &\leq i\,(\frac{8\log T}{\Delta_{i,j}^2} + 1 + \frac{\pi^2}{3}),
\end{aligned}
\end{equation}
which concludes the proof.
\end{proof}
Combining Lemma \ref{lemma:Tij} and \eqref{eq:R(T)_single_3} gives the upper bound of the expected regret in Theorem \ref{theorem:single}:
\begin{equation}
\begin{aligned}
     \mathbb{E}[R(T)] &\leq \sum_{i=1}^{K-1}\sum_{j=i+1}^{K}\Bigg\{W_i\,\Delta_{i,j}\mathbb{E}[T_{i,j}]\Bigg\}\\
      &\leq \sum_{i=1}^{K-1}\Bigg\{i\,W_i\sum_{j=i+1}^{K} [\frac{8\log T}{\Delta_{i,j}} + (1 + \frac{\pi^2}{3})\Delta_{i,j}]\Bigg\}.
\end{aligned}
\end{equation}
\end{proof}

\section{Proof of Lemma \ref{lemma:greedy_2}}\label{app:greedy_2}
\begin{proof}
When $K \leq 2M$, there are at most two observation steps for each player. As shown in Figure~\ref{fig:greedy_2}, we assume $\mu_a,\,\mu_b$ is larger than $\mu_c,\,\mu_d$, and now the expected reward for player 1 and player 2 is $r_{\text{old}} = (1-\tau)(\mu_a+\mu_b)+(1-2\tau)[(1-\mu_a)\mu_c +(1-\mu_b)\mu_d]$. If we switch arms with $\mu_a$ and $\mu_d$, the expected reward becomes $r_{\text{new}} = (1-\tau)(\mu_d+\mu_b) + (1-2\tau)[(1-\mu_d)\mu_c + (1-\mu_b)\mu_a]$, so the gap between them is:
\begin{equation}
\begin{aligned}
    r_{\text{old}} - r_{\text{new}} &= (1-\tau)(\mu_a - \mu_d) - (1-2\tau)(1-\mu_b+\mu_c)(\mu_a - \mu_d)\\
    &\geq  (1-\tau)(\mu_a - \mu_d) - (1-2\tau)(\mu_a - \mu_d)\\
    &\geq  \tau(\mu_a - \mu_d)\\
    &\geq  0.
\end{aligned}
\end{equation}
So the expected reward will only decrease when switching an arm with lower expected reward from step 2 to step 1, which ensures the optimal offline policy to be a greedy policy. 
\end{proof}

\section{Proof of Theorem \ref{theorem:multi}}\label{app:theorem:multi}
\begin{proof}
Unlike \eqref{eq:R_single}, we cannot directly upper bound \eqref{eq:R_multi} since $\prod_{i=1}^{k-1}(1-\mu_{(k-1)M+m})$ is not always less than $\prod_{i=1}^{k-1}(1-\mu_{o^{(i)}_{m,t}})$. Due to the correlation between different players' expected rewards, the analysis of the regret is challenging. Our idea is to decompose the regret into two parts: the first part is the regret caused by putting the arms into the wrong observation steps; the second part is the regret caused by different arm allocations within one observation step, where the set of arms to be allocated is correct. Define $R^{s}_{i,k}(T)$ as the regret caused by putting arm $i>kM$ into a wrong observation step $k$, when all previous observation steps are correct. In Figure~\ref{fig:greedy_2}'s illustration, this corresponds to an arm being placed in the incorrect column, though the arms in prior columns are placed correctly. We will show why this is sufficient to capture the first part of the total regret. Define $R^{a}_{i,k}$ as the regret caused by arm $i$ in the correct observation step $k$, i.e., $(k-1)M+1 \leq i \leq kM$, to capture the second part of the total regret. This regret corresponds to arm $i$ being placed in the correct column $k$ but incorrect row in Figure~\ref{fig:greedy_2}. We can then rewrite the total regret as:
\begin{equation}
\label{equ:Rs+Ra}
    R(T) \leq \sum_{k=1}^{L} \left\{\sum_{i>kM}^{K}R^{s}_{i,k}(T)+\sum_{i=(k-1)M+1}^{kM} R^{a}_{i,k}(T)\right\}.
\end{equation}
In order to find the upper bound of $R(T)$, we need to bound $R^{s}_{i,k}(T)$ and $R^{a}_{i,k}(T)$ separately. Let us first consider $R^{s}_{i,k}(T)$. Denote $T^{s}_{i,k}$ as the number of times that arm $i$ is in the $k^{\text{th}}$ observation step. Under algorithm \ref{alg:c-mp-obp}, we can bound $\mathbb{E}[T^{s}_{i,k}]$ for all $i > kM$.
\begin{lemma}
\label{lemma:Ts}
We have $\mathbb{E}[T^{s}_{i,k}] \leq kM\,(\frac{8\log T}{\Delta_{kM,i}^2} + 1 + \frac{\pi^2}{3}), \forall\ i > kM$.
\end{lemma}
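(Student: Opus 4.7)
The plan is to adapt the single-player argument from Lemma \ref{lemma:Tij} to the block-of-$M$ structure used by C-MP-OBP. Recall that Algorithm \ref{alg:c-mp-obp} sorts the arms globally by UCB value and assigns the arms in UCB-positions $(k-1)M+1, \ldots, kM$ to observation step $k$. Thus arm $i$ lands in step $k$ at round $t$ if and only if $\hat{\mu}_i(t)$ holds a rank in that block. When $i>kM$, this means $\hat{\mu}_i(t)$ is among the top $kM$ UCB values, so at most $kM-1$ of the truly better arms $\{1,2,\dots,kM\}$ can sit strictly above it. By pigeonhole, there must exist some $k'\in[kM]$ with $\hat{\mu}_{k'}(t)\le \hat{\mu}_i(t)$.

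Turning this into a counting bound, I would write
\begin{equation*}
T^{s}_{i,k}\;\le\;\sum_{t=1}^{T}\mathbbm{1}\!\left\{\min_{k'\in[kM]}\hat{\mu}_{k'}(t)\le \hat{\mu}_i(t)\right\}\;\le\;\sum_{k'=1}^{kM}\sum_{t=1}^{T}\mathbbm{1}\{\hat{\mu}_{k'}(t)\le\hat{\mu}_i(t)\},
\end{equation*}
exactly as in the single-player proof but with $i$ replaced by $kM$. Since $i>kM$ and $\mu_1\ge\cdots\ge\mu_K$, each pair $(k',i)$ with $k'\in[kM]$ satisfies $\mu_{k'}>\mu_i$, so this is a genuine sub-optimal-arm event for which the standard UCB1 concentration argument of \cite{UCB1_survey} applies.

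Next I would invoke that concentration argument (Chernoff-Hoeffding on $\overline{\mu}_{k'}(t)$ and $\overline{\mu}_i(t)$, combined with the $\sqrt{2\log t/n(t)}$ confidence radius chosen in Algorithm \ref{alg:c-mp-obp}) to conclude
\begin{equation*}
\mathbb{E}\!\left[\sum_{t=1}^{T}\mathbbm{1}\{\hat{\mu}_{k'}(t)\le\hat{\mu}_i(t)\}\right]\le \frac{8\log T}{\Delta_{k',i}^{2}}+1+\frac{\pi^{2}}{3}.
\end{equation*}
Using $\Delta_{k',i}=\mu_{k'}-\mu_i\ge\mu_{kM}-\mu_i=\Delta_{kM,i}$ for every $k'\le kM$ uniformly bounds each of the $kM$ summands by $\tfrac{8\log T}{\Delta_{kM,i}^{2}}+1+\tfrac{\pi^{2}}{3}$, and summing over $k'\in[kM]$ yields the claim.

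I do not expect a real obstacle here: the only subtlety is making the pigeonhole step explicit for the block-sorting rule, and carefully verifying that the confidence-interval concentration from UCB1 still applies even though C-MP-OBP observes arm $k'$ only when no earlier arm in player $m$'s list succeeded. The point is that observations still produce i.i.d. Bernoulli samples of $\mu_{k'}$ whenever they happen, and $n_{k'}(t)$ counts exactly those observations, so the standard Chernoff-Hoeffding bound on $\overline{\mu}_{k'}(t)$ transfers without modification. Everything else is a direct generalization of Lemma \ref{lemma:Tij}.
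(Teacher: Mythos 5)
Your proposal is correct and follows essentially the same route as the paper's proof: the same pigeonhole observation that arm $i$ in step $k$ forces some arm $k'\in[kM]$ to have $\hat{\mu}_{k'}(t)\le\hat{\mu}_i(t)$, the same union bound over the $kM$ better arms, the same appeal to the UCB1 concentration argument, and the same uniform bound $\Delta_{k',i}\ge\Delta_{kM,i}$. Your closing remark about why the Chernoff--Hoeffding step survives the conditional observation structure is a useful extra verification, but it does not change the argument.
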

\begin{proof}
Algorithm 2 sorts the UCB values to determine $\boldsymbol{o}_{m,t}$, so $T^{s}_{i,k}$ is equal to the number of times that $\hat{\mu}_i(t)$, the UCB value of arm $i$, should be at least the $kM^{\text{th}}$ largest one in $\hat{\boldsymbol{\mu}}(t)$. In that case, at least one arm in the set $\{1, 2, \dots, kM\}$ has smaller UCB value than $\hat{\mu}_i(t)$, since at most $kM-1$ arms have larger UCB values than $\hat{\mu}_i(t)$. Thus, $T^{s}_{i,k}$ can be bounded by the number of times that the minimum UCB value of arms $\{1, 2, \dots, kM\}$ is less than $\hat{\mu}_i(t)$:
\begin{equation}
\begin{aligned}
    T^{s}_{i,k} &\leq \sum_{t=1}^{T} \mathbbm{1}\{\min_{j\in [kM]} \hat{\mu}_j(t) \leq \hat{\mu}_i(t)\}\\
    &\leq \sum_{t=1}^{T} \sum_{j=1}^{kM} \mathbbm{1}\{\hat{\mu}_j(t) \leq \hat{\mu}_i(t)\}\\
    &\leq \sum_{j=1}^{kM}\sum_{t=1}^{T} \mathbbm{1}\{\hat{\mu}_j(t) \leq \hat{\mu}_i(t)\}.
\end{aligned}
\end{equation}
Since $i > kM$ and $j \in [kM]$, we can bound $\sum_{t=1}^{T} \mathbbm{1}\{\hat{\mu}_j(t) \leq \hat{\mu}_i(t)\}$
using the same idea to bound the number of times of choosing sub-optimal arms in traditional UCB1 algorithm \cite{UCB1_survey}. We can get:
\begin{equation}
\begin{aligned}\label{eq:number-wrong-steps}
    \mathbb{E}[T^{s}_{i,k}] &\leq \sum_{j=1}^{kM}\mathbb{E}\left[\sum_{t=1}^{T} \mathbbm{1}\{\hat{\mu}_j(t) \leq \hat{\mu}_i(t)\}\right]\\
    &\leq \sum_{j=1}^{kM}\left\{ \frac{8\log T}{\Delta_{j,i}^2} + 1 + \frac{\pi^2}{3}\right\}\\
    &\leq kM\,(\frac{8\log T}{\Delta_{kM,i}^2} + 1 + \frac{\pi^2}{3}),
\end{aligned}
\end{equation}
which concludes the proof.
\end{proof}
In order to find the upper bound of $R^{s}_{i,k}(T)$, we also need to consider the value of regret in each round. Define $R^{\text{max}}_{k}$ as the maximum one-round regret for one player when he has a wrong arm in the $k^{\text{th}}$ observation step and all previous selected arms are correct. We consider the worst case to get this maximum regret, which puts this wrong arm $i$ on the first place in the $k^{\text{th}}$ observation step, i.e.,  $o^{(k)}_{1,t} = i$, since $\mu_{1+(k-1)M}>\mu_{2+(k-1)M}>\dots>\mu_{kM}$. From \eqref{eq:reward_central}, we can get:
\begin{equation}
\label{eq:Rs_max}
\begin{aligned}
    R^{\text{max}}_{k} &\leq \sum_{j=k}^{L}\Bigg\{(1 - j\tau)\mu_{(j-1)M+1}\prod_{x=1}^{k-1}(1-\mu_{(x-1)M+1})\Bigg\}\\
    &\leq (L-k+1)\,\mu_{(k-1)M+1}.
\end{aligned}
\end{equation}
Recall that $\alpha = \frac{\mu_{\text{max}}}{\Delta_{\text{min}}}$, where $\mu_{\text{max}} = \underset{i}{\max}\,\mu_{i}$ and $\Delta_{\text{min}} = \underset{i < j}{\min}\,\Delta_{i,j}$. 
Combining Lemma \ref{lemma:Ts} and \eqref{eq:Rs_max} gives the upper bound of $R^{s}_{i,k}(T)$:
\begin{equation}
\label{eq:Rs(T)}
\begin{aligned}
    \mathbb{E}[R^{s}_{i,k}(T)] &\leq R^{\text{max}}_{k}\,\mathbb{E}[T^{s}_{i,k}]\\
    &\leq kM(L-k+1)(\frac{8\log T}{\Delta_{kM,i}^2} + 1 + \frac{\pi^2}{3})\mu_{1+(j-1)M}\\
    &\leq \alpha kM(L-k+1)\left [\frac{8\log T}{\Delta_{kM,i}} + (1 + \frac{\pi^2}{3})\Delta_{kM,i}\right ]
\end{aligned}
\end{equation}

Let us move to the discussion of $R^{a}_{i,k}(T)$. This part of the regret comes from the fact that, at the $k^{\text{th}}$ observation step, although players choose from the correct set of arms $\{(k-1)M+1, (k-1)M+2, \dots, kM\}$, there are $M!$ possible allocations, which might cause regret compared to the baseline policy. Now we need to consider the regret of putting arm $i$ into the wrong place within the correct observation step $k$, where $(k-1)M+1 \leq i \leq kM$. Denote $T^{a}_{i,k}$ as the number of times that arm $i$ appears in a wrong place at the correct observation step $k$. Under Algorithm \ref{alg:c-mp-obp}, we can bound $\mathbb{E}[T^{a}_{i,k}]$ for all $(k-1)M+1 \leq i \leq kM$.
\begin{lemma}
\label{lemma:Ta}
For all $(k-1)M+1 \leq i \leq kM$, under Algorithm \ref{alg:c-mp-obp}, $\mathbb{E}[T^{a}_{i,k}] \leq (i-1)\,(\frac{8\log T}{\Delta_{i-1,i}^2} + 1 + \frac{\pi^2}{3}) +(K-i)\,(\frac{8\log T}{\Delta_{i,i+1}^2} + 1 + \frac{\pi^2}{3})$.
\end{lemma}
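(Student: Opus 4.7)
The plan is to bound $T^a_{i,k}$ by counting, for each other arm $j \neq i$, the number of rounds in which the UCB ordering disagrees with the true ordering between $i$ and $j$. The starting observation is that Algorithm~\ref{alg:c-mp-obp} sorts arms globally by UCB value and then splits them into the observation steps $\boldsymbol{s}_{1,t},\dots,\boldsymbol{s}_{L,t}$; consequently arm $i$ occupies its ``correct'' in-step position precisely when its UCB rank at time $t$ equals its true rank $i$. If instead the UCB rank of arm $i$ is some $r \neq i$, then either $r > i$, in which case a simple pigeonhole argument produces an arm $j < i$ with $\hat\mu_j(t) \le \hat\mu_i(t)$, or $r < i$, in which case there must exist an arm $j > i$ with $\hat\mu_j(t) \ge \hat\mu_i(t)$.

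Given this, I would apply a union bound over all possible inverted pairs to obtain
\begin{equation*}
T^a_{i,k} \;\le\; \sum_{j<i}\sum_{t=1}^{T}\mathbbm{1}\{\hat\mu_j(t) \le \hat\mu_i(t)\} \;+\; \sum_{j>i}\sum_{t=1}^{T}\mathbbm{1}\{\hat\mu_i(t) \le \hat\mu_j(t)\}.
\end{equation*}
Each inner sum is the same object that appeared in Lemma~\ref{lemma:Tij} and in \eqref{eq:number-wrong-steps}: the count of rounds in which a ``better'' arm has a lower UCB than a ``worse'' arm. For $j < i$ we have $\mu_j > \mu_i$, so the standard UCB1 sub-optimal-pull bound gives $\mathbb{E}\big[\sum_t \mathbbm{1}\{\hat\mu_j(t) \le \hat\mu_i(t)\}\big] \le \tfrac{8\log T}{\Delta_{j,i}^{2}} + 1 + \tfrac{\pi^2}{3}$, which I then loosen by replacing $\Delta_{j,i}$ with the smallest relevant gap $\Delta_{i-1,i}$ (since $\mu_j \ge \mu_{i-1}$ whenever $j<i$). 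Symmetrically, for $j > i$ I use $\Delta_{i,j} \ge \Delta_{i,i+1}$ to obtain a uniform bound with $\Delta_{i,i+1}$ in the denominator.

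Summing the $i-1$ terms of the first type and the $K-i$ terms of the second type then yields exactly
\begin{equation*}
\mathbb{E}[T^{a}_{i,k}] \;\le\; (i-1)\!\left(\tfrac{8\log T}{\Delta_{i-1,i}^{2}} + 1 + \tfrac{\pi^2}{3}\right) + (K-i)\!\left(\tfrac{8\log T}{\Delta_{i,i+1}^{2}} + 1 + \tfrac{\pi^2}{3}\right),
\end{equation*}
as claimed. The main obstacle, and really the only non-routine step, is the initial combinatorial reduction: one must be careful to argue that a misplacement of arm $i$ \emph{within} its (correct) step $k$ always implies at least one pairwise UCB/true-order inversion between arm $i$ and some other arm in $[K]\setminus\{i\}$, not just inversions confined to step $k$. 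Once that pigeonhole step is in hand, the remainder is a direct reinvocation of the UCB1 tail bound already used twice in the paper, so no new probabilistic tools are required.
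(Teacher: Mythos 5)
Your proposal is correct and follows essentially the same route as the paper's proof: both split the misplacement of arm $i$ into the ``ranked too high'' and ``ranked too low'' cases (the paper's $T^{a-}_{i,k}$ and $T^{a+}_{i,k}$), use a pigeonhole argument on the global UCB ranking to exhibit an inverted pair, union-bound over the $i-1$ better and $K-i$ worse arms, and invoke the standard UCB1 pairwise bound before loosening the gaps to $\Delta_{i-1,i}$ and $\Delta_{i,i+1}$. The only cosmetic difference is that the paper's displayed chain loosens one step further to $(K-1)\bigl(\frac{8\log T}{\Delta_{\min}^2}+1+\frac{\pi^2}{3}\bigr)$, whereas you stop at the form stated in the lemma.
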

\begin{proof}
Let us first consider that arm $i$ appears before its correct place and denote the number of times it happens as $T^{a-}_{i,k}$. Algorithm \ref{alg:c-mp-obp} sorts the UCB values of arms, so $T^{a-}_{i,k}$ is equal to the number of times that $\hat{\mu}_i(t)$, the UCB value of arm $i$, is at least the $i-1$ largest one in $\hat{\boldsymbol{\mu}}(t)$. In that case, at least one arm in the set $\{1, 2, \dots, i-1\}$ has smaller UCB value than $\hat{\mu}_i(t)$, since at most $i-2$ arms have larger UCB values than $\hat{\mu}_i(t)$. Thus, $T^{a-}_{i,k}$ can be bounded by the number of times that the minimum UCB value of arms $\{1, 2, \dots, i-1\}$ is less than $\hat{\mu}_i(t)$. On the other hand, if arm $i$ appears after its correct place, denote the number of times it happens as $T^{a+}_{i,k}$. In that case, at least one arm in the set $\{i+1, i+2, \dots, K\}$ has larger UCB value than $\hat{\mu}_i(t)$, since at most $K-i$ arms have smaller UCB values than $\hat{\mu}_i(t)$. Thus, $T^{a+}_{i,k}$ can be bounded by the number of times that the maximum UCB value of arms $\{i+1, i+2, \dots, K\}$ is larger than $\hat{\mu}_i(t)$. We can get:
\begin{equation}
\begin{aligned}
    T^{a-}_{i,k} &\leq \sum_{t=1}^{T} \mathbbm{1}\{\min_{1\leq j\leq i-1} \hat{\mu}_j(t) \leq \hat{\mu}_i(t)\}\\
    &\leq \sum_{j=1}^{i-1}\sum_{t=1}^{T} \mathbbm{1}\{\hat{\mu}_j(t) \leq \hat{\mu}_i(t)\}.
\end{aligned}
\end{equation}
\begin{equation}
\begin{aligned}
    T^{a+}_{i,k} &\leq \sum_{t=1}^{T} \mathbbm{1}\{\max_{i+1\leq j\leq K} \hat{\mu}_j(t) \geq \hat{\mu}_i(t)\}\\
    &\leq \sum_{j=i+1}^{K}\sum_{t=1}^{T} \mathbbm{1}\{\hat{\mu}_j(t) \geq \hat{\mu}_i(t)\}.
\end{aligned}
\end{equation}
Similar to Lemma \ref{lemma:Ts}, we can bound both terms $T^{a-}_{i,k}$ and $T^{a+}_{i,k}$, and $T^{a}_{i,k}$ should be less than their sum:
\begin{equation}
\begin{aligned}
    \mathbb{E}[T^{a}_{i,k}] &\leq
    \mathbb{E}[T^{a-}_{i,k}] + \mathbb{E}[T^{a+}_{i,k}]\\
    &\leq
    \mathbb{E}\left[\sum_{j=1}^{i-1}\sum_{t=1}^{T} \mathbbm{1}\{\hat{\mu}_j(t) \leq \hat{\mu}_i(t)\} +     \sum_{j=i+1}^{K}\sum_{t=1}^{T} \mathbbm{1}\{\hat{\mu}_j(t) \geq \hat{\mu}_i(t)\}\right]\\
    &\leq \sum_{j=1}^{i-1}\left\{ \frac{8\log T}{\Delta_{j,i}^2} + 1 + \frac{\pi^2}{3}\right\} + \sum_{j=i+1}^{K}\left\{ \frac{8\log T}{\Delta_{i,j}^2} + 1 + \frac{\pi^2}{3}\right\}\\
    &\leq (K-1)\,(\frac{8\log T}{\Delta_{\min}^2} + 1 + \frac{\pi^2}{3}),
\end{aligned}
\end{equation}
which concludes the proof.
\end{proof}
With Lemma \ref{lemma:Ta} and \eqref{eq:Rs_max}, we can write $R^{a}_{i,k}$ as:
\begin{equation}
\label{eq:Ra(T)}
\begin{aligned}
    \mathbb{E}[R^{a}_{i,k}(T)] \leq& R^{\text{max}}_{k}\,\mathbb{E}[T^{a}_{i,k}]\\
    \leq &(L-k+1)(i-1)(\frac{8\log T}{\Delta_{i-1,i}^2} + 1 + \frac{\pi^2}{3})\mu_{1+(j-1)M}\\
    +& (L-k+1)(K-i)(\frac{8\log T}{\Delta_{i,i+1}^2} + 1 + \frac{\pi^2}{3})\mu_{1+(j-1)M}\\
    \leq & c_{\mu} (L-k+1)(i-1)\left [\frac{8\log T}{\Delta_{i-1,i}} + (1 + \frac{\pi^2}{3})\Delta_{i-1,i}\right]\\
    +&c_{\mu} (L-k+1)(K-i)\left [\frac{8\log T}{\Delta_{i,i+1}} + (1 + \frac{\pi^2}{3})\Delta_{i,i+1}\right].
\end{aligned}
\end{equation}
Define $T_{\max}:= \frac{8\log T}{\Delta_{\min}} + (1 + \frac{\pi^2}{3})\Delta_{\max}$.
Finally, with \eqref{eq:reward_central}, \eqref{eq:Rs(T)} and \eqref{eq:Ra(T)}, we can bound $\mathbb{E}[R(T)]$:
\begin{equation}
\begin{aligned}
    \mathbb{E}[R(T)] \leq& \sum_{k=1}^{L} \left\{\sum_{i>kM}^{K}\mathbb{E}[R^{s}_{i,k}(T)]+\sum_{i=(k-1)M+1}^{kM} \mathbb{E}[R^{a}_{i,k}(T)]\right\}\\
    \leq& \sum_{k=1}^{L}\Bigg\{\sum_{i>kM}^{K} c_{\mu} kM(L-k+1) T_{\max} +\sum_{i=(k-1)M+1}^{kM}c_{\mu} (L-k+1)(K-1)T_{\max}\Bigg\}\\
    \leq& c_{\mu} L^2 K^2\,T_{\max} + \alpha L^2 M K\,T_{\max}\\
    \leq& c_{\mu} K^2(L^2+L)\,T_{\max}.
\end{aligned}
\end{equation}
\end{proof}

\section{Proof of Theorem \ref{theorem:distributed}}
\begin{proof}
In order to prove Theorem \ref{theorem:distributed}, we first consider the following lemma:
\begin{lemma}\label{lemma:loss-distributed}
\begin{equation}\label{eq:loss-initial}
    \mathbb{E}[\text{Loss}(T)] \le \mu_{\max}\mathbb{E}[\#~\text{of collisions}] + \sum_{k=1}^L\sum_{i>km}^K R_{i,k}^s(T)
\end{equation}
\end{lemma}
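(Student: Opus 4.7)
The plan is to decompose $\mathbb{E}[\mathrm{Loss}(T)]$ into two contributions: loss from collisions and loss from misranking arms across observation steps, bounding each separately and combining them.

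First I would fix a specific collision-free greedy target policy $\pi$ and write the per-round reward gap as a sum of cases depending on what happens in round $t$. Classify rounds into three types: (a) rounds where at least one player's computed set $\boldsymbol{s}_{i,t}$ disagrees with the target greedy sets $\{(i-1)M+1,\dots,iM\}$ at some step $i$ (a ``misranking'' round); (b) rounds where every player's sets match the target sets but some collision still occurs; and (c) rounds where every player's sets are correct and no collision occurs. Since $\mathrm{Loss}(T)$ is defined against an arbitrary collision-free greedy policy, I would pick $\pi$ so that in case (c) rounds the target matches the realized allocation of D-MP-OBP; case (c) then contributes zero to the loss, which is exactly what lets the $R^a$-type within-step allocation term from Theorem~\ref{theorem:multi} drop out of the bound.

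For case (b), I would observe that each collision event prevents at most one player from collecting the reward of a single arm whose expected value is at most $\mu_{\max}$. Hence each collision contributes at most $\mu_{\max}$ to the per-round gap, and summing over all rounds gives the $\mu_{\max}\mathbb{E}[\#\ \text{of collisions}]$ term in the statement. For case (a), I would mirror the $R^s_{i,k}(T)$ analysis of Theorem~\ref{theorem:multi}: a misranking event at step $k$ corresponds to slotting some wrong arm $i > kM$ into step $k$ (with the earlier steps correct), and the one-round loss is at most $R^{\max}_k = (L-k+1)\mu_{(k-1)M+1}$ as in \eqref{eq:Rs_max}. The expected number of such misranking events over $T$ rounds is controlled by the UCB concentration argument underlying Lemma~\ref{lemma:Ts}. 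Summing over $k$ and $i > kM$ yields the $\sum_{k=1}^L\sum_{i>kM}^K R^s_{i,k}(T)$ contribution.

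The main obstacle will be handling the distributed UCB bookkeeping and the coupling between collisions and misrankings. Unlike C-MP-OBP, each player in D-MP-OBP maintains its own UCB estimates, so the set $\boldsymbol{s}_{i,t}$ is really player-specific, and the notion of ``misranking'' must be lifted to per-player rankings before the Lemma~\ref{lemma:Ts} argument applies; care is needed to ensure that summing player-wise misranking counts does not disturb the stated bound. In addition, collisions perturb each player's observation counts (and hence future UCB estimates), which could in principle perpetuate misrankings; the argument must confirm that this feedback loop preserves the logarithmic-in-$T$ scaling of the misranking count, so that the two contributions add up to the form stated in \eqref{eq:loss-initial} rather than interacting multiplicatively.
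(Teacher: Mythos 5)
Your decomposition is essentially the paper's own argument: the paper likewise splits the loss into rounds with collisions (each costing at most $\mu_{\max}$ per colliding player, giving the $\mu_{\max}\mathbb{E}[\#~\text{of collisions}]$ term) and collision-free rounds (where the loss is attributed to arms placed in the wrong observation step and bounded by the $R^s_{i,k}(T)$ terms from the centralized analysis). Your three-way case split and the explicit choice of the target greedy policy to absorb the within-step allocation term only make explicit what the paper's two-sentence proof leaves implicit, so the approach is the same.
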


\label{app:theorem:distributed}
\begin{proof}
Here $R_{i,k}^s$ is as defined in \eqref{equ:Rs+Ra}. Lemma \ref{lemma:loss-distributed} essentially upper-bounds $\text{Loss}(T)$ by the maximum regret caused by collisions and the total regret due to observing arms in the wrong steps. 
Whenever there are collisions at any given round $t$, the expected loss of reward compared to any offline policy is no larger than the highest regret at $t$ over all users who encounter a collision, {\em i.e.,} every user gets zero reward in our policy while every user gets the highest reward in expectation in the offline policy. When there's no collision, the loss compared to any greedy policy is caused by observing arms in the wrong steps, {\em i.e.,} which is at most $\sum_{k=1}^L\sum_{i>km}^K R_{i,k}^s(T)$.
\end{proof}

To further upper-bound $ \mathbb{E}[\text{Loss}(T)]$, we proceed in the next lemma to upper-bound $\mathbb{E}[\#~\text{of collisions}]$ across all players. The basic idea of the proof is to consider the number of collisions in: (1) rounds where each player chooses from the correct list of arms in each observation step and (2) rounds in which there exists at least one player having at least one arm in the wrong step. We respectively call these (1) good phases (i.e., sequential rounds where the first condition is satisfied in each round) and (2) bad rounds. The term $K \binom{2M-1}{M}$ upper-bounds the number of collisions of each step in each good phase, and $M$ upper-bounds that in each bad round. Since the number of non-sequential good phases is no larger than the number of bad rounds plus one, the lemma follows. 
\begin{lemma}\label{lemma:collisions}
The total expected number of collisions is at most
\begin{equation}\label{eq:collision}
\left( K\binom{2M-1}{M} + M\right) \times \sum_{k=1}^L\sum_{i>kM}^K\mathbb{E}[T_{i,k}^s]
\end{equation}
\end{lemma}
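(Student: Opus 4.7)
The plan is to partition the $T$ rounds into two disjoint classes and bound the expected collisions in each. Call a round \emph{good} if, in every observation step $i$, every player's set $\boldsymbol{s}_{i,t}$ equals the correct set $\{(i-1)M+1,\dots,iM\}$ — equivalently, no arm is in a wrong observation step for any player in that round. Otherwise call the round \emph{bad}. A maximal run of consecutive good rounds is a \emph{good phase}. First I would observe that the number of bad rounds is at most $\sum_{k=1}^L\sum_{i>kM}^K T_{i,k}^s$, because every bad round contributes at least one misplaced arm-step pair (summed across players this stays within the stated bound since we count the event ``arm $i$ is in step $k$'' globally via the shared UCB ranking).

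Next I would bound collisions inside a single good phase, step by step. Within one fixed observation step $i$ of a good phase, every player's candidate set is the same correct size-$M$ set. By the sticky rule of Algorithm~\ref{alg:d-mp-obp}, a player who experiences no collision on an arm from that set keeps it in the next round; only players involved in a collision re-randomize uniformly over the $M$ arms in $\boldsymbol{s}_{i,t}$. This is a ``musical chairs'' process on $M$ items, and the expected number of collision rounds before the players settle into a collision-free matching is a combinatorial constant depending only on $M$; I would argue that across all $L=K/M$ observation steps within one good phase this total is at most $K\binom{2M-1}{M}$ in expectation (the $\binom{2M-1}{M}$ factor arises as an upper bound on the number of joint random re-assignments needed for $M$ players to reach a matching on $M$ arms, times the $L=K/M$ steps yielding the extra factor absorbed into $K$).

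Then I would bound collisions inside a bad round by $M$ trivially, since at most $M$ players are active and each can be involved in at most one collision per round. Combining: if $B$ denotes the number of bad rounds, the number of good phases is at most $B+1$ because each bad round can break at most one good phase. Thus
\begin{equation*}
\mathbb{E}[\#\text{collisions}] \;\le\; (B+1)\cdot K\binom{2M-1}{M} \;+\; B\cdot M \;\le\; \left(K\binom{2M-1}{M} + M\right)\cdot (B+1),
\end{equation*}
and taking expectations together with the bad-round bound $\mathbb{E}[B]\le \sum_{k=1}^L\sum_{i>kM}^K \mathbb{E}[T_{i,k}^s]$ (which by Lemma~\ref{lemma:Ts} is itself $O(\log T)$ and hence absorbs the ``$+1$'') yields the claimed inequality.

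The main obstacle will be the $K\binom{2M-1}{M}$ bound on expected collisions per good phase. The sticky-plus-uniform-reroll dynamics need a clean coupling or potential-function argument to show that once the candidate sets stabilize the matching forms in a bounded (in $M$) expected number of steps, independently of $t$. Subtleties: (i) a player may still reroll inside a good phase because her previous-round arm left the current $\boldsymbol{s}_{i,t}$ (this cannot happen inside a phase by definition of good), and (ii) collisions across different observation steps are not independent, so I would either handle steps sequentially using the tower property or give a union-bound over the $L$ steps, which is already accommodated by the factor $K=ML$ in front of the binomial.
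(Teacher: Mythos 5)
Your proposal follows essentially the same route as the paper's proof: the same decomposition into bad rounds (at most $M$ collisions each, counted against $\sum_{k}\sum_{i>kM}T^s_{i,k}$) and good phases (an absorbing ``musical chairs'' process with at most $\binom{2M-1}{M}$ ball-in-box states per step, accumulated over the $K/M$ steps to give $K\binom{2M-1}{M}$ per phase), combined via the observation that the number of good phases exceeds the number of bad rounds by at most one. The details you flag as remaining obstacles (the absorbing-time bound and the extra ``$+1$'' phase) are handled in the paper at the same level of rigor you sketch, so the argument matches.
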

\begin{proof}
It is easy to verify the total number of collisions over all bad rounds are at most $M$ times the total number of those rounds. Thus, in the following, we only need to consider the good phases. In a good phase, every user has the same (and also correct) set of arms to observe in each step $i$. We simply check how the $M$ users are ``assigned to'' the $M$ arms in each step $i$. We first consider a given round $t$ where every user encounters a collision in round $t-1$. 
In this case, each user will uniformly at random select one out of those $M$ arms in round $t$. We now consider the total number of distinct configurations of arms and users. Since in this lemma, we are calculating the number of collisions rather than the reward or regret of each user, we do not distinguish different users choosing the same arm. Thus, two configurations are distinct iff there exists at least one arm that has a different number of assigned users between these two configurations. This random process is equivalent to assigning $M$ balls into $M$ boxes which has a total of $\binom{2M-1}{M}$ distinct configurations \cite{wiki:absorbing}\cite{book:combinatorics}.  
Now we consider the cases where $\gamma$ out of M users (let $M>\gamma >0$ will continue to choose the same arms as in the previous round, since there was no collision in the previous round. Similarly, the number of distinct user-arm configurations is at most $\binom{2M-1-\gamma}{M-\gamma}$, which is smaller than $\binom{2M-1}{M}$. Since each user's decision is only dependent on his decision and outcome in the previous round, this random process of assigning users to arms over time is a Markov chain with at most $\binom{2M-1}{M}$ states. Moreover, it's easy to verify that once the process enters a state where users choose different arms in a given step, it will stay in this state, as long as the good phase hasn't transitioned to a bad round. Therefore, this stochastic process is an Absorbing Markov chain with an absorbing time no larger than $\binom{2M-1}{M}$ rounds \cite{distributed_jsac}. Thus, the total number of collisions of each step within each good phase is at most $M\binom{2M-1}{M}$. However, we have to consider an extreme case where for any given observation step $i$, it enters an absorbing state with a number of $\binom{2M-1}{M}$ rounds, but the chosen arms of all users are realized to be unavailable. Thus, all of them have to enter observation step $i+1$ and the process starts over from a possibly transient state. The worst case is that the above extreme case happens over all $K/M$ observation steps. Therefore, the maximum number of collisions in a good phase is at most $K\binom{2M-1}{M}$. Combining the total number of bad rounds with the number of collisions in each good phase and bad round respectively, the lemma follows. 
\end{proof}

Note that the multiplicative term in \eqref{eq:collision}, $\mathbb{E}[T_{i,k}^s]$, has been given in \eqref{eq:number-wrong-steps}. Putting \eqref{eq:collision} and \eqref{eq:Rs(T)} into \eqref{eq:loss-initial}, we get Theorem \ref{theorem:distributed}. While this loss bound is logarithmic in the number of rounds $T$, like the $O(\frac{K^3}{M} L\log(T))$ regret bound given in Theorem~\ref{theorem:multi} for the C-MP-OBP policy, it is combinatorial in $M$ instead of being polynomial in $K = LM$. The lack of coordination in the distributed setting introduces an additional cost from possible collisions.
\end{proof}

\section{Proof of Theorem \ref{theorem:distributed_adapt}}\label{app:theorem:distributed_adapt}
\label{appx-theorem:distributed_adapt}
\begin{proof}
\begin{lemma}\label{lemma:adapt-distributed}
The total expected regret,
\begin{equation} \label{eq:adapt-init}
    \mathbb{E}[R(T)] \leq \mu_{\max} \mathbb{E}[\#~\text{of collisions}]\notag + \sum_{k=1}^L \left(\sum_{i>km}^K R_{i,k}^s(T) + \sum_{i=1(k-1)M + 1}^{KM} R_{i,k}^a(T)\right)
\end{equation}
\end{lemma}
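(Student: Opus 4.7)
The plan is to mirror the decomposition used for D-MP-OBP in Lemma~\ref{lemma:loss-distributed}, but now against a \emph{specific} target greedy policy fixed by the steering function $f$, which forces the appearance of the within-step regret term $R^a_{i,k}(T)$ in addition to the wrong-step term $R^s_{i,k}(T)$. First I would write $R(T) = \mathbb{E}[r^*(T)] - \mathbb{E}[r(T)]$ as a sum over rounds and, within each round, partition the event space into two classes: (i)~rounds in which at least one collision occurs among the $M$ players, and (ii)~rounds in which all $M$ players end up pre-observing/playing pairwise distinct arms. The claimed upper bound is additive over these two classes, which is loose but sufficient, so I would prove each piece separately and sum.

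For class~(i), the idea is that whenever a collision happens on some arm, the colliding players earn zero reward from that arm in D-MP-Adapt-OBP, while under the target policy each such player would have earned at most $\mu_{\max}$ in expectation on the corresponding step. Charging $\mu_{\max}$ to every collision event (with multiplicities: each player who collides is counted once) yields exactly the first summand $\mu_{\max}\,\mathbb{E}[\#~\text{of collisions}]$. I would also note that the subsequent observation steps in the same round can only contribute nonnegative additional regret, so uniformly upper-bounding the round by $\mu_{\max}$ per colliding player is safe.

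For class~(ii), the execution is by construction collision-free, so the joint pre-observation pattern is a valid (possibly suboptimal) greedy-style allocation of arms to players. Here I would reuse exactly the reasoning behind \eqref{equ:Rs+Ra} from the centralized C-MP-OBP analysis: the per-round regret against the target decomposes into (a)~regret from placing an arm $i$ with $i>kM$ into observation step $k$ when all previous steps are correct, captured by $R^s_{i,k}(T)$, and (b)~regret from placing arm $i\in\{(k-1)M+1,\dots,kM\}$ in the correct step $k$ but at the wrong position within the step, captured by $R^a_{i,k}(T)$. The steering function $f$ plays the role of the centralized assignment here, so the same per-round worst-case bound $R^{\max}_k$ from \eqref{eq:Rs_max} applies to each mis-placement event.

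The main obstacle is justifying that these two contributions can simply be added without double counting. A round can simultaneously feature a collision \emph{and} an arm placed in the wrong step, and the $R^s$/$R^a$ terms as defined count the wrong-placement regret regardless of whether a collision occurred. I would resolve this by treating \eqref{eq:adapt-init} as an upper bound derived from three nonnegative inequalities: (1)~drop any negative regret contributions, (2)~on collision rounds, replace the actual loss by the $\mu_{\max}$-per-collision surrogate, and (3)~on all rounds (collision or not), upper bound the mis-placement loss by the centralized quantities $R^s_{i,k}(T)$ and $R^a_{i,k}(T)$, which are themselves expectations over realizations of the UCB indices and therefore majorize the distributed counterparts. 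Summing these three majorizations yields the stated inequality and closes the proof sketch.
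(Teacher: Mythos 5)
Your proposal is correct and follows essentially the same route as the paper: the paper likewise bounds $\mathbb{E}[R(T)]$ by the collision loss (charging $\mu_{\max}$ per collision, as in the proof of Theorem~\ref{theorem:distributed}) plus the wrong-step terms $R^s_{i,k}(T)$ plus the within-step misallocation terms $R^a_{i,k}(T)$ from \eqref{eq:Ra(T)}. Your explicit discussion of why the additive combination does not double count is a useful elaboration of a step the paper leaves implicit, but it is not a different argument.
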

\begin{proof}
The total expected regret can be upper-bounded by the sum of the expected loss and the expected regret due to choosing the wrong arm from the right step over all users. Combining the proof in Theorem \ref{theorem:distributed} and \eqref{eq:Ra(T)}, this lemma follows.
\end{proof}

To further upper-bound $R(T)$, we upper-bound the expected number of collisions in the following lemma. 
\begin{lemma}\label{lemma:collision-adapt}
We have:
\begin{equation}\label{eq:collision-adapt}
        \mathbb{E}[\#~\text{of collisions}] 
    \le M \left( \binom{2M-1}{M} + 1\right)\notag
    \times \sum_{k=1}^L\left(\sum_{i>kM}^K\mathbb{E}[T_{i,k}^s] + \sum_{i=(k-1)M + 1}^{KM} \mathbb{E}[T_{i,k}^{a}]\right)
\end{equation}
\end{lemma}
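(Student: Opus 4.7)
The plan is to mirror the proof of Lemma~\ref{lemma:collisions} but carefully exploit the structural property that, in D-MP-Adapt-OBP, the function $f$ deterministically maps each player's choice in step~$1$ to all of its subsequent observations in that round. I would partition the $T$ rounds into two disjoint classes. A round is \emph{bad} if there exists some $(i,k)$ with $i>kM$ such that the arm $i$ currently sits in observation step $k$ (contributing to $T_{i,k}^s$), or if all arms lie in the correct step but some arm $i$ with $(k-1)M+1\le i\le kM$ sits in a wrong position within step $k$ (contributing to $T_{i,k}^a$). The remaining rounds, partitioned into maximal consecutive blocks, are called \emph{good phases}: in every round of a good phase, each $\boldsymbol{s}_{i,t}$ matches the correct target set for the target greedy policy induced by $f$.

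First I would bound the number of bad rounds. By the definitions of $T_{i,k}^s$ and $T_{i,k}^a$, every bad round contributes at least one unit to some such counter, so the total number of bad rounds is at most $\sum_{k=1}^L\big(\sum_{i>kM}^K T_{i,k}^s+\sum_{i=(k-1)M+1}^{kM}T_{i,k}^a\big)$. In each bad round at most $M$ collisions can occur (at most one per player), contributing the additive $+1$ inside the parenthesis $(\binom{2M-1}{M}+1)$ in the final bound, and the outer factor~$M$.

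Next I would handle good phases. The key reduction is: inside a good phase, once the $M$ players hold distinct arms in step~$1$ (i.e., the allocation is collision-free for step~$1$), the function $f$ applied coordinate-wise determines a deterministic and bijective mapping of these $M$ first-step arms to the $M$ arms in each subsequent step, so no collisions can occur in steps $2,\ldots,L$ for the remainder of that phase. Thus within a good phase only step-$1$ choices matter. Following the ball-in-boxes Markov-chain argument of Lemma~\ref{lemma:collisions}, the step-$1$ configuration evolves on at most $\binom{2M-1}{M}$ states with the collision-free configurations being absorbing, and the absorbing time is bounded by $\binom{2M-1}{M}$ rounds, yielding at most $M\binom{2M-1}{M}$ collisions per good phase. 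Since the number of good phases is at most the number of bad rounds (each good phase is terminated by a bad round), summing over phases gives at most $M\binom{2M-1}{M}\cdot(\#\text{bad rounds})$ collisions from good phases; adding the $M\cdot(\#\text{bad rounds})$ from bad rounds yields $M\bigl(\binom{2M-1}{M}+1\bigr)\cdot(\#\text{bad rounds})$. Taking expectations and substituting the bound on bad rounds finishes the proof.

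The main obstacle I anticipate is justifying that a collision-free step-$1$ allocation in a good phase propagates through $f$ to all subsequent steps without creating collisions. This needs the fact that $f(\cdot,\{\hat{\mu}_k(t)\}_{k=1}^K)$, viewed as a map from $\boldsymbol{s}_{1,t}$ to $\boldsymbol{s}_{l,t}$ for each $l\ge 2$, is a bijection whenever the target policy is a collision-free greedy policy, which follows from $f$'s construction as a re-labeling of the centralized player index by the rank of the first-step arm. A secondary subtlety is that $\boldsymbol{s}_{i,t}$ can change between consecutive good rounds if the UCB ordering shifts without the round becoming bad; I will handle this by noting that such shifts do not reset the absorbing Markov chain since the set $\boldsymbol{s}_{1,t}$ stays the same inside a good phase by definition, and only the \emph{labels} inside each step are unchanged in the sense relevant to the ball-in-boxes count.
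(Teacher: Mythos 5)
Your proposal is correct and follows essentially the same route as the paper's own proof: the same decomposition into bad rounds (bounded by the $T^s_{i,k}$ and $T^a_{i,k}$ counters, each contributing at most $M$ collisions) and good phases (bounded via the ball-in-boxes absorbing Markov chain with at most $\binom{2M-1}{M}$ states), together with the same key observation that $f$ propagates a collision-free step-$1$ allocation bijectively to all later steps so the chain is not reset and the per-phase collision count stays at $M\binom{2M-1}{M}$ independent of $L$. Your explicit justification of the bijectivity of $f$ and the handling of UCB-order shifts inside a good phase are if anything slightly more careful than the paper's exposition.
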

\begin{proof}
Interestingly, the first term in \eqref{eq:collision-adapt} (the number of collisions in a good phase) is smaller than that of our fair strategy D-MP-OBP. This can be explained intuitively as follows. According to our D-MP-Adapt-OBP, the decisions of the steps $2, \cdots, L$ are determined by the decisions of step $1$ and $f(\cdot)$, given the reward estimations of all arms. Therefore, within a good phase, when the first step becomes collision-free, the following steps will all be collision-free. In this sense, the number of collisions will not increase with the number of observation steps. 
Consistent with the terminologies used in the proof of Lemma \ref{lemma:collisions}, we consider each round where there exists a user who either chooses an arm in the wrong observation step (a bad round) or chooses the wrong arm from the right observation step. The analysis for the collisions in the former event is the same as Lemma \ref{lemma:collisions}. The latter event can be divided into three cases: (1) in the first observation step, multiple users play the same arm; (2) in a later observation step $i>1$, two or more user choose an unavailable arm $j$ in step $i-1$, and they both choose arm $f(j, \{\hat{\mu_k}\}_{k=1}^{K})$ in step $i$; (3) in a later observation step $i>1$, the user has a different order of arms with at least one other user, {\em e.g.,} user 1 and user 2 are supposed to choose the arms in the second position and the third position respectively but they both choose arm 2 as user 2 mistakenly ranks arm 2 in the third position. In any one of the above three cases, there is at most one collision encountered by each user in each round. Now we consider the good phases in which users have the same (and correct) order of arms. For the first observation step, there are at most $M\binom{2M -1}{M}$ rounds before entering an absorbing state. 
Since the positions of arms to choose in each step $i>1$ are determined by the arms chosen in step $1$,
observing the arms in each observation step $i>1$ (only when the arms chosen in the previous observation arm sets are unavailable) does not transition the state from an absorbing state to a transient state. Thus, the total expected number of collisions in a good phase over all steps is still $M\binom{2M-1}{M}$, which does not increase with the number of observation steps. Putting the above together, the lemma follows. 
\end{proof}

Combining Lemma \ref{lemma:adapt-distributed}, Lemma \ref{lemma:collision-adapt}, \eqref{eq:Rs(T)}, and \eqref{eq:Ra(T)}, the theorem directly follows.
\end{proof}

\section{Effect of \texorpdfstring{$\mu$}{Lg}}
We would intuitively expect that increasing the average rewards $\mu$ would increase the reward gap with the random baseline: it is then more important to pre-observe ``good'' arms first, to avoid the extra costs from pre-observing occupied arms. We confirm this intuition in each of our three settings. However, increasing $\mu$ does not always increase the reward gap with the single-observation baseline. If $\mu$ is very low or very high, pre-observations are less valuable and the reward gap is relatively small. When the $\mu$ are small, the player would need to pre-observe several arms to find an available one, decreasing the final reward due to the cost of these pre-observations. When the $\mu$ are large, simply choosing the best arm is likely to yield a high reward, and the pre-observations would usually be unnecessary.

Figures~\ref{fig:single_mu_offline_2} and~\ref{fig:single_mu_random_2} show the reward gap of the single-player OBP-UCB with the single-observation and random algorithms for different average rewards $\mu$ and a fixed value of $\tau$. As discussed above, a larger $\mu$ increases the reward gap with the random baseline, and first increases but then decreases the reward gap with the single-observation baseline.
In Figures~\ref{fig:multi_mu_opt}--\ref{fig:dis_mu_random}, we plot the reward gap with respect to $\mu$ in the centralized and distributed multi-player settings. As in the single-player setting, an increase in $\mu$ increases the reward gap with the random baseline, but has a non-monotonic effect compared to the single-observation baseline.
\captionsetup[figure]{labelfont=bf}
\begin{figure*}
\centering
\begin{subfigure}[t]{0.45\linewidth}
	\includegraphics[width=\textwidth]{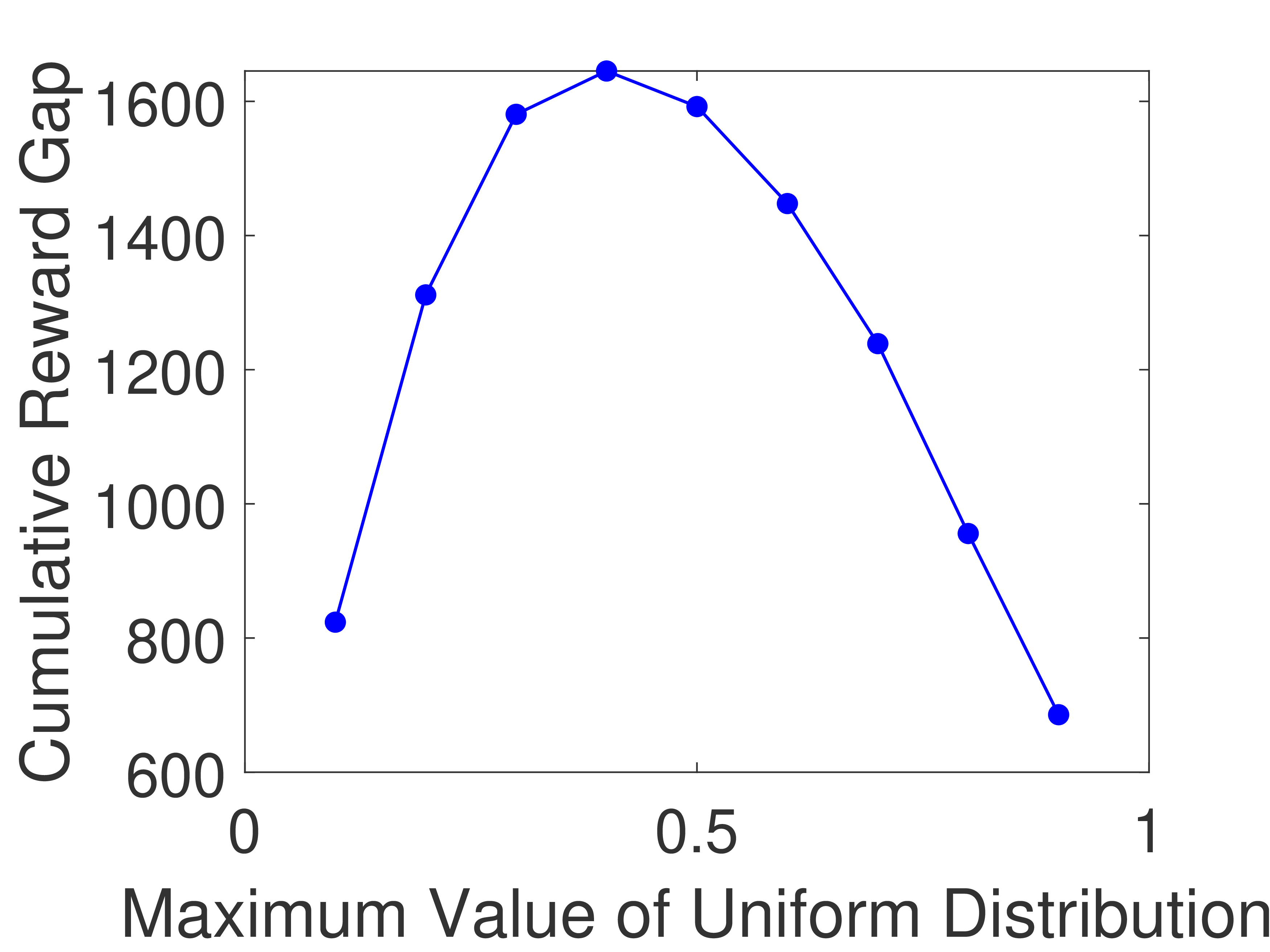}
	\caption{OBP-UCB: single-opt.}
	\label{fig:single_mu_offline_2}
\end{subfigure}
\begin{subfigure}[t]{0.45\linewidth}
	\includegraphics[width=\textwidth]{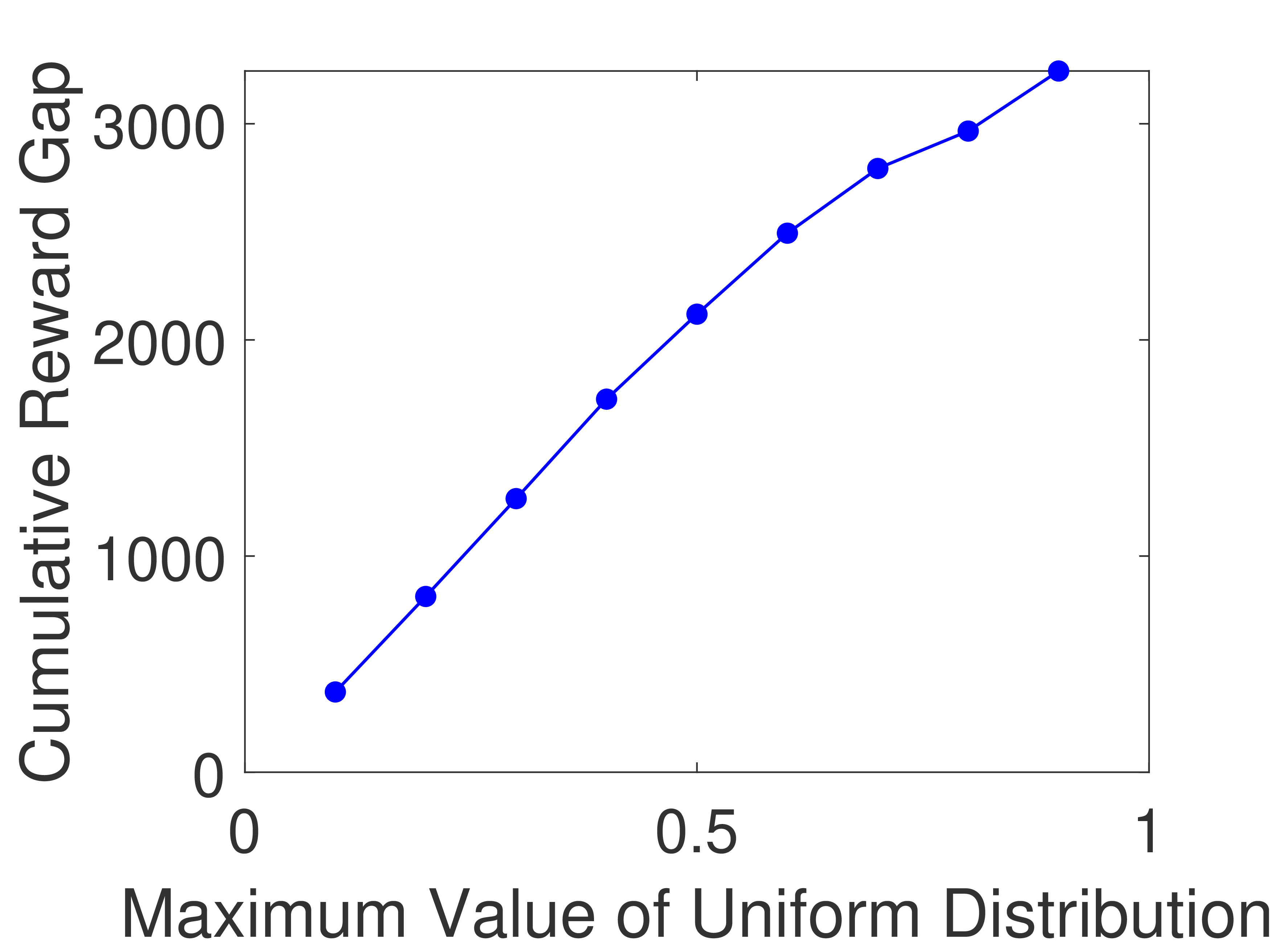}
	\caption{OBP-UCB: random.}
	\label{fig:single_mu_random_2}
\end{subfigure}
\newline
\begin{subfigure}[t]{0.45\linewidth}
	\includegraphics[width=\textwidth]{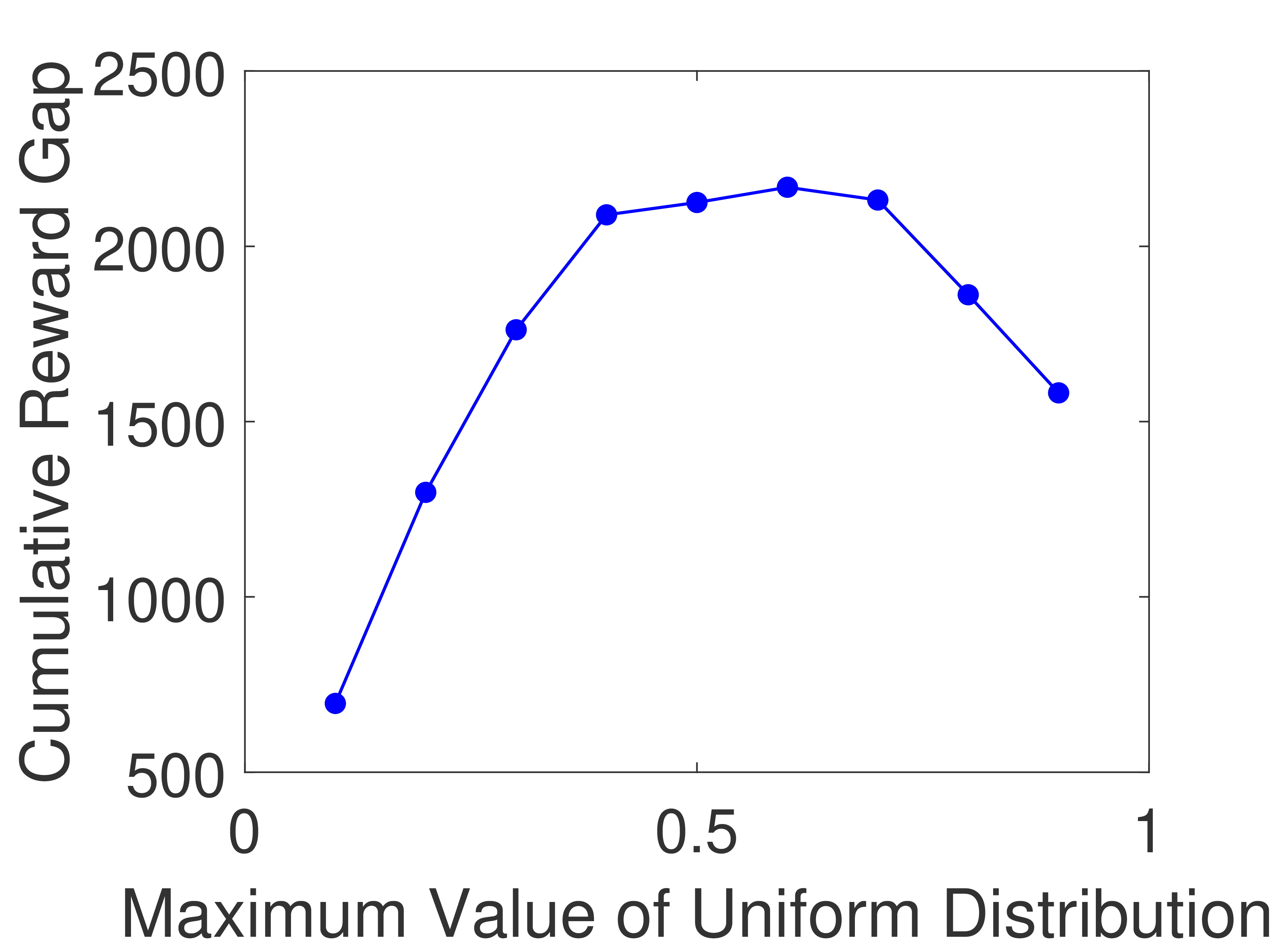}
	\caption{C-MP-OBP: single-opt.}
	\label{fig:multi_mu_opt}
\end{subfigure}
\begin{subfigure}[t]{0.45\linewidth}
    \centering
	\includegraphics[width=\textwidth]{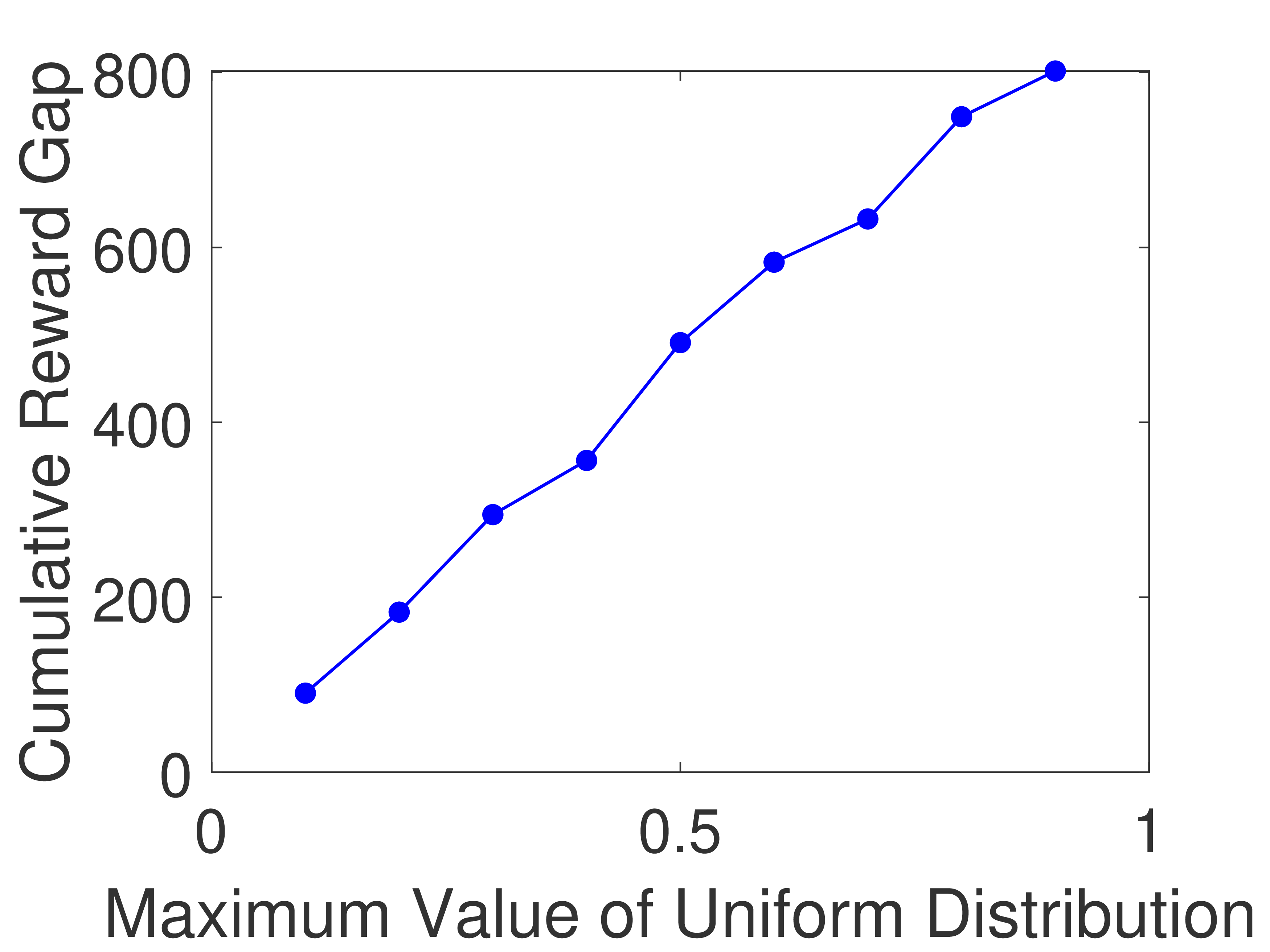}
	\caption{C-MP-OBP: random.}
	\label{fig:multi_mu_random}
\end{subfigure}
\newline
\begin{subfigure}[t]{0.45\linewidth}
	\includegraphics[width=\textwidth]{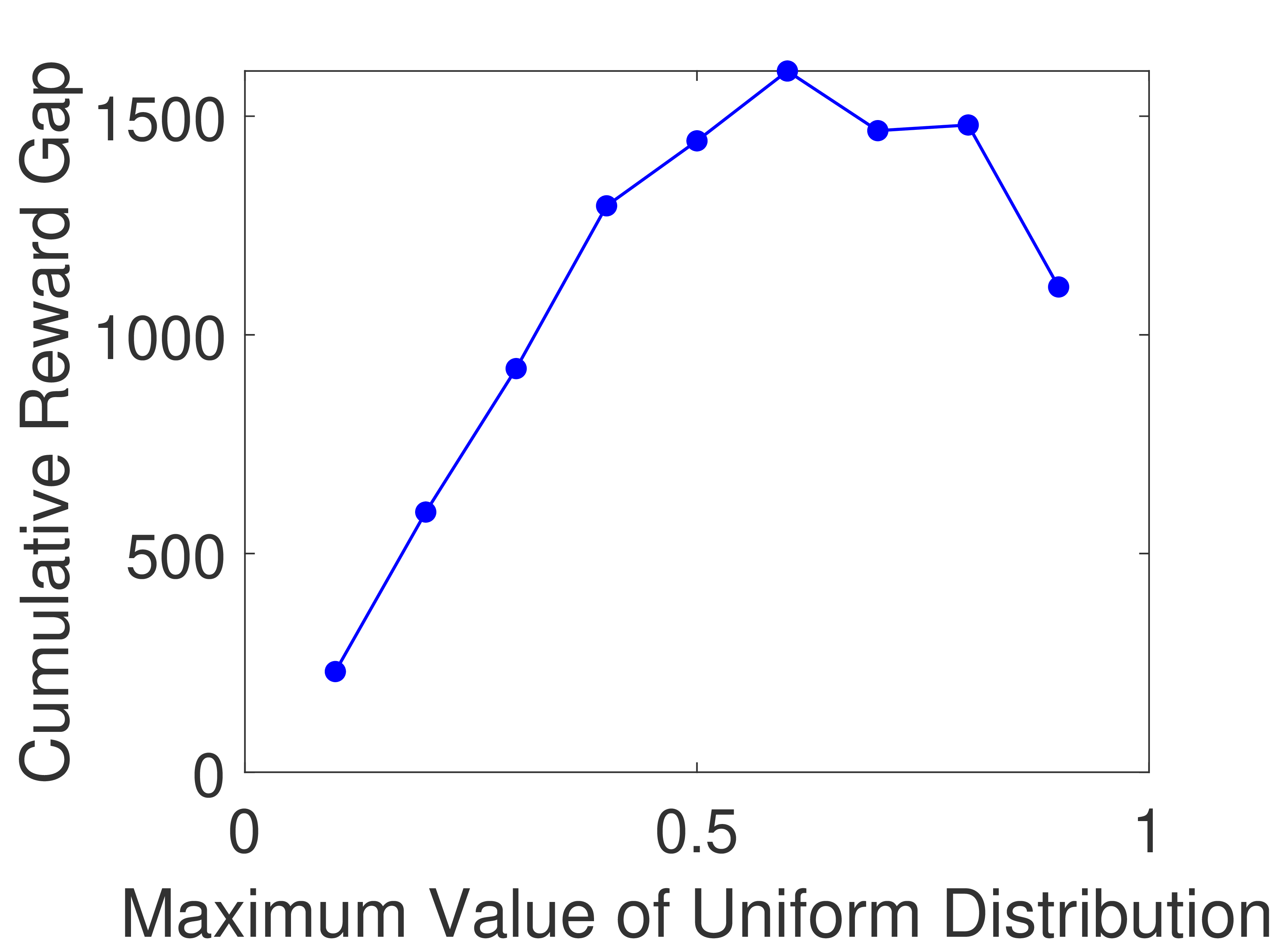}
	\caption{D-MP-OBP: single-opt.}
	\label{fig:dis_mu_opt}
\end{subfigure}
\begin{subfigure}[t]{0.45\linewidth}
    \centering
	\includegraphics[width=\textwidth]{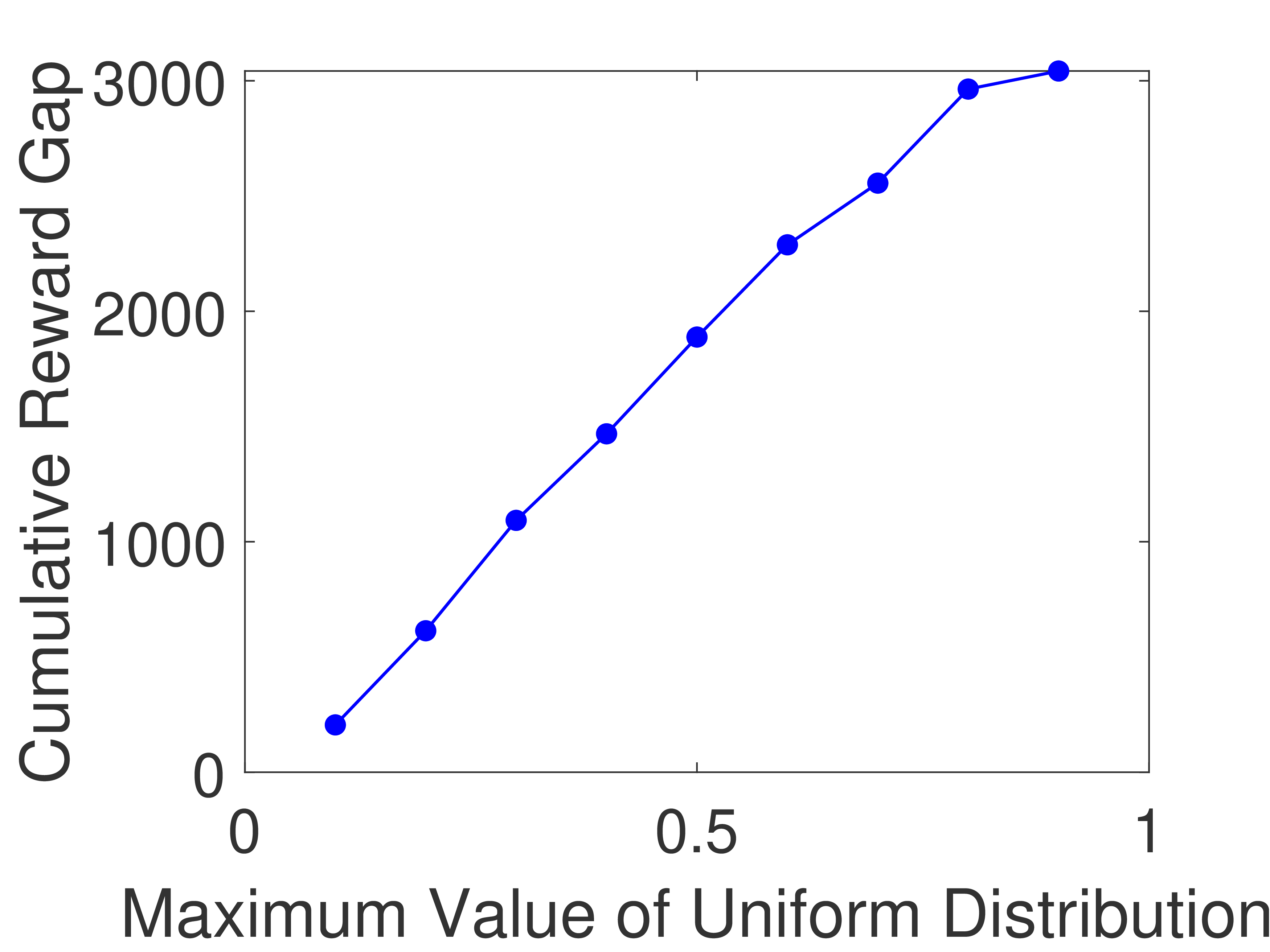}
	\caption{D-MP-OBP: random.}
	\label{fig:dis_mu_random}
\end{subfigure}
\caption{Average cumulative reward gaps in the single-player (OBP-UCB), centralized multi-player (C-MP-OBP), and distributed multi-player (D-MP-OBP) settings after 5000 rounds over 100 experiment repetitions with $\tau = 0.1$ and $K = 9$ arms and rewards uniformly drawn from $[0, x]$, where $x$ is the maximum value of the uniform distribution.}
\label{fig:mu_2}
\end{figure*}

\end{document}